\theoremstyle{plain}
\newtheorem{theorem}{Theorem}[section]
\newtheorem{lemma}[theorem]{Lemma}
\theoremstyle{definition}
\newtheorem{definition}[theorem]{Definition}
\theoremstyle{remark}
\def\eqref#1{equation~\ref{#1}}
\def\1{\bm{1}}
\def\va{{\bm{a}}}
\def\ve{{\bm{e}}}
\def\vh{{\bm{h}}}
\def\vm{{\bm{m}}}
\def\vp{{\bm{p}}}
\def\vr{{\bm{r}}}
\def\vt{{\bm{t}}}
\def\vx{{\bm{x}}}
\def\vy{{\bm{y}}}
\def\mD{{\bm{D}}}
\def\mH{{\bm{H}}}
\def\mI{{\bm{I}}}
\def\mK{{\bm{K}}}
\def\mP{{\bm{P}}}
\def\mQ{{\bm{Q}}}
\def\mR{{\bm{R}}}
\def\mV{{\bm{V}}}
\def\mW{{\bm{W}}}
\def\mX{{\bm{X}}}
\DeclareMathAlphabet{\mathsfit}{\encodingdefault}{\sfdefault}{m}{sl}
\SetMathAlphabet{\mathsfit}{bold}{\encodingdefault}{\sfdefault}{bx}{n}
\newcommand{\tens}[1]{\bm{\mathsfit{#1}}}
\def\tX{{\tens{X}}}
\def\gE{{\mathcal{E}}}
\def\gG{{\mathcal{G}}}
\def\gN{{\mathcal{N}}}
\def\gV{{\mathcal{V}}}
\def\sA{{\mathbb{A}}}
\def\sX{{\mathbb{X}}}
\def\sY{{\mathbb{Y}}}
\newcommand{\E}{\mathbb{E}}
\newcommand{\R}{\mathbb{R}}
\useunder{\uline}{\ul}{}
\icmltitlerunning{Generalist Equivariant Transformer Towards 3D Molecular Interaction Learning}
\begin{document}

\twocolumn[
\icmltitle{Generalist Equivariant Transformer\\ Towards 3D Molecular Interaction Learning}



\icmlsetsymbol{equal}{*}

\begin{icmlauthorlist}
\icmlauthor{Xiangzhe Kong}{thucs}
\icmlauthor{Wenbing Huang}{ruc,lab}
\icmlauthor{Yang Liu}{thucs,air}
\end{icmlauthorlist}

\icmlaffiliation{thucs}{Dept. of Comp. Sci. \& Tech., Institute for AI, BNRist Center, Tsinghua University}
\icmlaffiliation{air}{Institute for AI Industry Research (AIR), Tsinghua University}
\icmlaffiliation{ruc}{Gaoling School of Artificial Intelligence, Renmin University of China}
\icmlaffiliation{lab}{Beijing Key Laboratory of Big Data Management and Analysis Methods}

\icmlcorrespondingauthor{Wenbing Huang}{hwenbing@126.com}
\icmlcorrespondingauthor{Yang Liu}{liuyang2011@tsinghua.edu.cn}

\icmlkeywords{unified representation, molecular interaction, equivariant graph network}

\vskip 0.3in
]



\printAffiliationsAndNotice{}  

\begin{abstract}
Many processes in biology and drug discovery involve various 3D interactions between molecules, such as protein and protein, protein and small molecule, etc.
Given that different molecules are usually represented in different granularity, existing methods usually encode each type of molecules independently with different models, leaving it defective to learn the various underlying interaction physics. 
In this paper, we first propose to universally represent an arbitrary 3D complex as a geometric graph of sets, shedding light on encoding all types of molecules with one model.
We then propose a Generalist Equivariant Transformer (GET) to effectively capture both domain-specific hierarchies and domain-agnostic interaction physics. To be specific, GET consists of a bilevel attention module, a feed-forward module and a layer normalization module, where each module is E(3) equivariant and specialized for handling sets of variable sizes. Notably, in contrast to conventional pooling-based hierarchical models, our GET is able to retain fine-grained information of all levels.  
Extensive experiments on the interactions between proteins, small molecules and RNA/DNAs verify the effectiveness and generalization capability of our proposed method across different domains.
\end{abstract}

\section{Introduction}
  
  Molecular interactions~\cite{tomasi1994molecular}, which describe attractive or repulsive forces between molecules and between non-bonded atoms, are crucial in the research of chemistry, biochemistry and biophysics, and come as foundation processes of various downstream applications, including drug discovery, material design, etc~\citep{sapoval2022current, tran2023open, vamathevan2019applications}. There are different types of molecular interactions, and this paper mainly focuses on the ones that exist in bimolecular complexes, consisting of proteins, small molecules or RNA/DNAs. Specifically, to better capture their physical effects, we study molecular interactions via 3D geometry where atom coordinates are always provided.

  Modeling molecular interaction relies heavily on appropriate representation of molecules. Recent studies apply Graph Neural Networks (GNNs) for this purpose~\cite{gilmer2017neural, jin2018junction}. This is motivated by the fact that graphs naturally represent molecules, by considering atoms as nodes and inter-atom interactions or bonds as edges. When further encapsulating 3D atom coordinates, geometric graphs~\cite{gasteiger2020directional, schutt2017schnet, stark20223d} are used in place of conventional graph modeling that solely encodes topology. To process geometric graphs, equivariant GNNs, a new kind of GNNs that meet E(3) equivariance regarding translation, rotation and reflection are proposed, which exhibit promising performance in molecule interaction tasks~\cite{kong2022molecule, luo2022antigen, townshend2020atom3d, zhang2022e3bind}.
  \begin{figure*}[!t]
    \centering
    \includegraphics[width=.9\textwidth]{./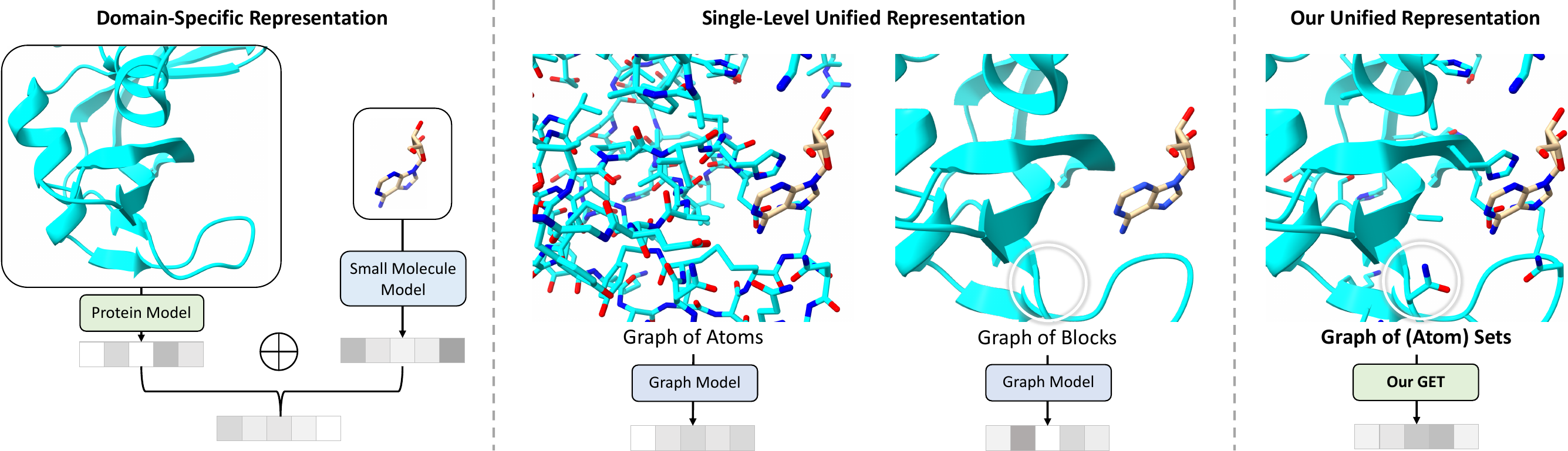}
    \vskip -0.1in
    \caption{Domain-specific representations and unified representations in molecular interaction.}
    \label{fig:intro}
    \vskip -0.1in
  \end{figure*}
 
  Despite the encouraging progress, there still lacks a desirable and unified form of cross-domain molecular representation in molecular interaction. The molecules of different domains like small molecules, proteins, and RNA/DNAs are usually represented in different granularity, which consist of atoms, residues, and nucleobases, respectively.
  Existing approaches typically design domain-specific representations and model each of the interacting instances independently~\citep{somnath2021multi, wang2022learning}, which are defective in learning the various underlying interaction physics. 
  Therefore, designing unified cross-domain molecular representation is demanded, which, however, is non-trivial. For one thing, directly applying unshared block-level graphs, whose nodes correspond to domain-specific building blocks, leads to limited transferability of the model from one domain to another. For another, decomposing all molecules into atom-level graphs discards the block specificity (\emph{e.g.} which residue each atom belongs to) and overlook valuable heuristics for representation learning. 
  It is still an open problem in designing a universal representation and a generalist model thereon to capture both the block-level specificity and atom-level shareability.

  In this paper, we tackle this problem by modeling a complex involved in molecular interaction as \emph{a geometric graph of sets}. This representation follows a bilevel design: in the top level, a complex is represented as a geometric graph of blocks; in the bottom level, each block contains a set of atomic instances.   
  It is nontrivial to process such bilevel geometric graphs, as the model should handle blocks of variable sizes and ensure certain specific geometries. 
  To this end, we propose \emph{Generalist Equivariant Transformer (GET)}, which consists of the three modules: bilevel attention module, feed-forward module and layer normalization module. To be specific, the bilevel attention module updates the information of each atom by adopting both sparse block-level and dense atom-level attentions.
  The feed-forward module is to inject the intra-block geometry to each atom, and the layer normalization module is proposed to stabilize and accelerate the training. All the modules are E(3)-equivariant regarding the 3D coordinates, permutation-invariant regarding all atoms within each block, and work regardless of the varying block size.
  We compare our method with other representation approaches in Figure~\ref{fig:intro}.   
  
  Notably, our formulation of graph of sets is relevant to conventional pooling-based hierarchical models based on graph of graphs~\citep{jin2022antibody}. Nevertheless, these hierarchical architecture are usually inefficient and will blot out the fine-grained information after certain pooling-based aggregation, while our GET is able to retain both the atom-level and block-level information. 
  
  We conduct experiments on various molecular interactions between proteins, small molecules and RNA/DNAs. The results exhibit the superiority of our GET on the proposed unified representation over traditional methods including domain-specific independent models, single-level unified representations and hierarchical models. More excitingly, we identify strong potential of GET in capturing and transferring shared knowledge across different domains, and enabling zero-shot performance on RNA/DNA-ligand binding affinity prediction.

\section{Related Work}

 \textbf{Molecular Interaction and Representation} \ Various types of molecules across different domains~\citep{du2016insights, elfiky2020anti, jones1996principles} can form interactions, the strength of which are usually measured by the energy gap between the unbound and bound states of the molecules (\emph{i.e.} affinity)~\citep{gilson1997statistical}. We primarily investigate interactions between two proteins~\citep{jones1996principles} and between a protein and a small molecule~\citep{du2016insights}, both of which are widely explored in the machine learning community~\citep{kong2022conditional, luo2023rotamer, luo2022antigen, somnath2021multi, stark20223d, wang2022learning}. Furthermore, we embark on a pioneering effort to involve RNA/DNAs, which is a challenging endeavor due to the limited availability of such data.
 Small molecules are usually represented by graphs where nodes are atoms~\citep{atz2021geometric, hoogeboom2022equivariant, xu2022geodiff, zaidi2022pre}, but there are also explorations on subgraph-level decomposition of molecules by mining motifs~\citep{geng2023novo, jin2018junction, kong2022molecule}. Proteins are built upon residues, which are predefined sets of atoms~\citep{richardson1981anatomy}, and thus have mainly two categories of representations according to the granularity of graph nodes: atom-level and residue-level. Atom-level representations, as the name suggests, decompose proteins into single atoms~\citep{townshend2020atom3d} and discard the hierarchy of proteins. Residue-level representations either exert pooling on the atoms~\citep{jin2022antibody}, or directly use residue-specific features~\citep{anand2022protein, shi2022protein, somnath2021multi, wang2022learning} which is limited to proteins. Similarly, RNA/DNAs also have atom-level and nucleobase-level representations~\citep{avsec2021effective, watson1953structure}. Despite the differences in building blocks, the basic units (\emph{i.e.} atoms) are shared across different molecular domains, and so do the fundamental interaction physics. Therefore, it is valuable to construct a unified representation for different molecular domains, which is explored in this paper.
 
 \textbf{Equivariant Network} \ Equivariant networks integrate the symmetry of 3D world, namely E(3)-equivariance, into the models, and thus are widely used in geometric learning~\citep{han2022geometrically}. A line of methods rely on preprocessing 3D coordinates into invariant features, \emph{e.g.} pairwise distances~\citep{schutt2017schnet, choukroun2021geometric}, angles~\citep{gasteiger2020directional, gasteiger2020fast, gasteiger2021gemnet, liu2021spherical}, to achieve invariant outputs. More recent works also keep track of equivariant features to achieve stronger expressivity~\citep{pmlr-v202-joshi23a}, either via scalarization~\citep{schutt2021equivariant, tholke2022torchmd, du2023new} or irreducible representations~\citep{thomas2018tensor, batzner20223, liao2022equiformer, batatia2022mace, musaelian2023learning}. Our work is inspired by multi-channel equivariant graph neural networks~\citep{huang2022equivariant, kong2022molecule} which assign each node with a coordinate matrix. However, they require a fixed number of channels (\emph{i.e.} constant number of rows in the coordinate matrix) and lack invariance w.r.t to the permutations of the coordinates, which limits their application here as each building block is an unordered set of atoms with variable size. Moreover, the node features are still limited to single vector form~\citep{tholke2022torchmd, liao2022equiformer}, which is unable to accommodate all-atom representations in single blocks. In contrast, our proposed model is designed to handle geometric graphs of sets where each node contains an unordered set of 3D instances with a different size, which fits perfectly with the concept of building blocks in molecules.

  \begin{figure*}[!t]
      \centering
      \includegraphics[width=\textwidth]{./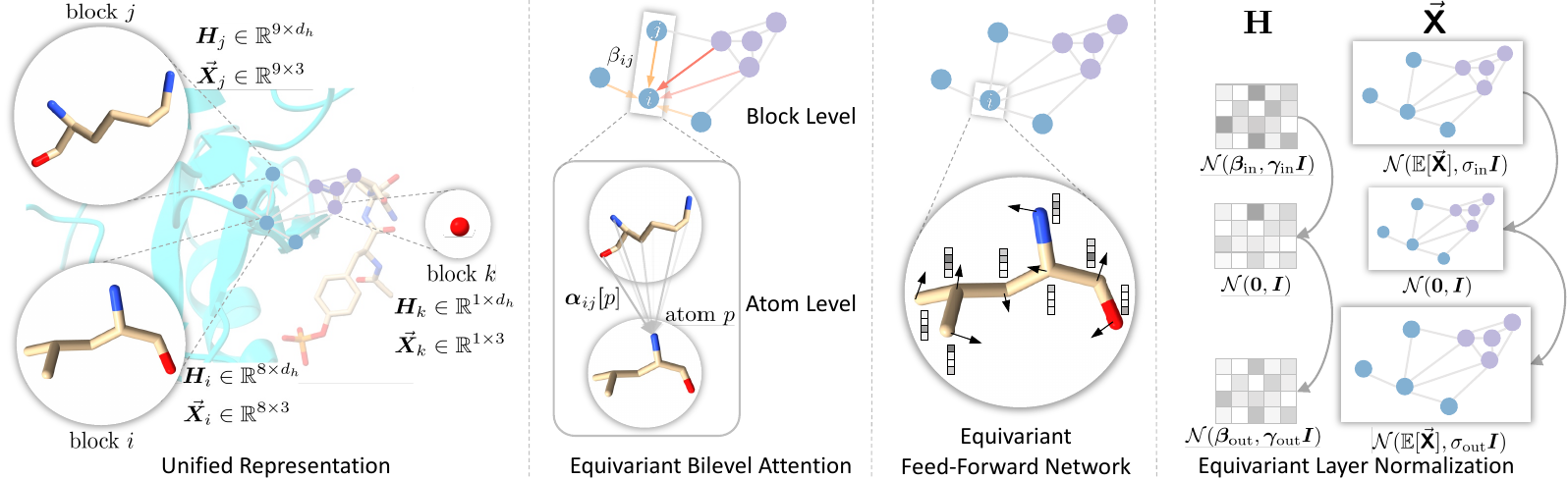}
      \vskip -0.1in
      \caption{Overview of the unified representation and the equivariant modules in our Generalist Equivariant Transformer (GET). From left to right: The unified representation treats molecules as geometric graphs of sets according to predefined building blocks; The bilevel attention module captures both sparse block-level and dense atom-level interactions via an equivariant attention mechanism; The feed-forward network injects the block-level information into the intra-block atoms; The layer normalization transforms the input distribution with trainable scales and offsets.}
      \label{fig:overview}
  \end{figure*}

\section{Method}
  We start by illustrating the proposed unified representation for molecules in \textsection~\ref{sec:repr}. Then we introduce GET in \textsection~\ref{sec:model}. Each layer of GET consists of the three types of E(3)-equivariant modules: a bilevel attention module, a feed-forward module, and a layer normalization after each previous module. The overall concepts are depicted in Figure~\ref{fig:overview} and the detailed scheme is presented in Appendix~\ref{app:model}.
  
\subsection{Unified Representation: Geometric Graphs of Sets}
\label{sec:repr}

  Graphs come as a central tool for molecular representations, and different kind of graphs is applied in different case. For instance, small molecules can be represented as single-level graphs, where each node is an atom, while proteins (RNA/DNAs) correspond to two-level graphs, where each node is a residue (nucleobase) that consists of a variable number of atoms. To better characterize the interaction between different molecules, below we propose a unified molecular representation. 
  
  Given a complex consisting of a set of atoms $\sA$, we first identify a set of blocks (\emph{i.e.} subsets) from $\sA$ according to some predefined notions (\emph{e.g.} residues for proteins). Then the complex is abstracted as a geometric graph of sets $\gG = (\gV, \gE)$, where $\gV = \{(\mH_i, \vec{\mX}_i) | 1 \leq i \leq B\}$ includes all $B$ blocks and $\gE = \{(i, j, \ve_{ij}) | 1 \leq i, j \leq B\}$ includes all edges between blocks\footnote{We have added self-loops to reflect self-interactions between the atoms in each block.}, where $\ve_{ij}\in\R^{d_e}$ distinguishes the type of the edge as intra-molecular or inter-molecular connection. In each block composed of $n_i$ atoms, $\mH_i \in \R^{n_i \times d_h}$ denotes a set of atom feature vectors and $\vec{\mX}_i \in \R^{n_i \times 3}$ denotes a set of 3D atom coordinates. To be specific, the $p$-th row of $\mH_i$, which is the feature vector of atom $p$, sums up the trainable embeddings of atom types $\va_{i}[p]$, block types $b_{i}$, and atom position codes $\vp_{i}[p]$ (see Appendix~\ref{app:atom_pos}), namely, $\mH_i[p] = \mathrm{Embed}(\va_{i}[p]) + \mathrm{Embed}(b_i) + \mathrm{Embed}(\vp_{i}[p]), 1 \leq p \leq n_i$. To reduce the computational complexity, we construct $\gE$ via k-nearest neighbors ($k=9$) according to the block distance which is defined as the minimum distance between inter-block atom pairs:
  \begin{align}
    d(i, j) = \min \{\|\vec{\mX}_{i}[p] - \vec{\mX}_j[q]\|_2 ~|~ 1 \leq p \leq n_i, 1 \leq q \leq n_j\}.
  \end{align}
  Overall, the block-level geometry derived from atomic interactions defines the connectivity of the graph, while the atom-level instances compose the unordered matrix-form node features with variable sizes. As we will observe in the next section, the above bilevel design allows our model to capture sparse interactions for the top level and dense interactions for the bottom level, achieving a desirable integration of different granularities.
  We will also demonstrate in the experiments that the representation can be easily extended to arbitrary block definitions (\emph{e.g.} subgraph-decomposition of small molecules~\citep{kong2022molecule, geng2023novo}).
  
  \textbf{Connection to Single-Level Representations}
  \label{sec:single_level}
  \ By restricting the blocks to one-atom subsets, we can obtain the \textbf{atom-level} representation where each node is one atom, and correspondingly both $\mH_i$ and $\vec{\mX}_i$ are downgraded to row vectors as $n_i \equiv 1$. If we retain the building blocks but replace $\mH_i$ and $\vec{\mX}_i$ with their centroids, then we obtain the \textbf{block-level} representation where the atoms in the same block are pooled into one single instance. Both single-level representations assign a vector and a 3D coordinate to each node, hence can be fed into most structural learning models~\citep{gasteiger2020directional, satorras2021egnn, schutt2017schnet, tholke2022torchmd}. On the contrary, the proposed bilevel representation requires the capability of processing E(3)-equivariant feature matrices ($\mH_i$ and $\vec{\mX}_i$) with a variable number of rows, which cannot be directly processed by existing models. Additionally, within each block, the rows of $\mH_i$ and $\vec{\mX}_i$ are indeed elements in a set and their update should be unaffected by the row order.
  Luckily, the above challenges are well handled by our Generalist Equivariant Transformer proposed in the next section.

  \textbf{Comparison with Hierarchical Representations}
  \ Previous studies~\citep{jin2022antibody} model proteins in a hierarchical manner, where the atom-level features within each residue are first pooled as the residue-level features that will be processed via the message passing over the graph of residues. In contrast to these hierarchical methods, our bilevel representation retains the information of both the atom-level and residue-level features simultaneously for attention-based message passing. Namely, the proposed matrix-form representation inherently preserves more spatial information than the hierarchical counterparts, since the fine-grained atomic distances are explicitly retained throughout the entire message passing process.
  
\subsection{Generalist Equivariant Transformer}
\label{sec:model}
  Upon the unified representation, we propose GET to model the structure of the input complex. As mentioned above, one beneficial property of GET is that it can tackle blocks of variable sizes emph{i.e.} matrix-form representations with variable shapes), while conventional explorations focus on vector-form features with fixed sizes. Besides, GET is sophisticatedly designed to ensure E(3)-equivariance and intra-block permutation invariance to handle the symmetry. Specifically, each layer of GET first exploits an equivariant bilevel attention module to capture both sparse interactions in block level and dense interactions  in atom level. Then an equivariant feed-forward module updates each atom with the geometry of its block. Finally, a novel equivariant layer normalization is implemented on both the hidden states and coordinates. We present a detailed scheme of GET in Appendix~\ref{app:model}.

\paragraph{Equivariant Bilevel Attention Module}
  Given two blocks $i$ and $j$ of $n_i$ and $n_j$ atoms, respectively, we first obtain the query, the key, and the value matrices as follows:
  \begin{align}
      \label{eq:qkv}
      \mQ_i = \mH_i\mW_Q,
      \quad
      \mK_j = \mH_j\mW_K,
      \quad
      \mV_j = \mH_j\mW_V,
  \end{align}
  where $\mW_Q, \mW_K, \mW_V \in \R^{d_h \times d_r}$ are trainable parameters. We denote $\vec{\mX}_{ij} \in \R^{n_i\times n_j \times 3}$ and $\mD_{ij} \in \R^{n_i\times n_j}$ as the relative coordinates and distances between any atom pair in block $i$ and $j$, namely, $\vec{\mX}_{ij}[p,q] = \vec{\mX}_i[p] - \vec{\mX}_j[q], \mD_{ij}[p, q] = \|\vec{\mX}_{ij}[p, q]\|_2$. 
  
  The \textbf{atom-level cross attention values} from $j$ to $i$ are calculated by:
  \begin{align}
      \label{eq:rij}
      \mR_{ij}[p, q] &= \phi_A(\mQ_i[p], \mK_j[q], \mathrm{RBF}(\mD_{ij}[p, q]), \ve_{ij}), \\
      \label{eq:alpha}
      \bm{\alpha}_{ij} &= \mathrm{Softmax}(\mR_{ij}\mW_A).
  \end{align}
  Here, $\ve_{ij}$ is the optional edge feature to distinguish between intra-molecule edges and inter-molecule edges; $\phi_A$ is a 2-layer Multi-Layer Perceptron (MLP) with SiLU~\citep{hendrycks2016gaussian} activation; $\mathrm{RBF}$~\citep{gasteiger2020directional} embeds the distance with radial basis functions (definition in Appendix~\ref{app:model}); $\mR_{ij}\in\R^{n_i\times n_j\times d_r}$ represents the relations between each atom pair in block $i$ and $j$, which are later mapped to scalars by $\mW_A\in\R^{d_r\times 1}$ to obtain the atom-level cross attentions $\bm{\alpha}_{ij}\in\R^{n_i\times n_j}$ between the two blocks through Softmax alone the columns of $\mR_{ij}\mW_A\in\R^{n_i\times n_j}$.
  
  The \textbf{block-level cross attention value}  from $j$ to $i$ is given by the following equations:
  \begin{align}
      \label{eq:brij}
      \vr_{ij} &=  \frac{1}{n_in_j}\sum_{p=1}^{n_i}\sum_{q=1}^{n_j}\mR_{ij}[p, q], \\
      \label{eq:beta}
      \beta_{ij} &= \frac{\exp(\vr_{ij}\mW_B)}{\sum_{j\in \gN(i)}\exp(\vr_{ij}\mW_B)},
  \end{align}
  where $\mW_B\in \R^{d_r\times 1}$, and $\gN(i)$ denotes the neighborhood blocks of $i$. Basically, $\vr_{ij} \in \R^{d_r}$ represents the global relation between $i$ and $j$ after aggregating all values in $\mR_{ij}$, which is then mapped to a scalar to obtain the block-level cross attentions $\beta_{ij}$ through Softmax in $\gN(i)$.
  
  With the atom-level and the block-level attentions at hand, we are ready to update both the hidden states and coordinates for each atom $p$ in block $i$:
  \begin{align}
    \label{eq:inv_msg}
    \vm_{ij,p} &= \bm{\alpha}_{ij}[p]\cdot \phi_v(\mV_j\parallel \mathrm{RBF}(\mD_{ij}[p])) \\
    \label{eq:equi_msg}
    \vec{\vm}_{ij,p} &= \bm{\alpha}_{ij}[p]\cdot (\vec{\mX}_{ij}[p]\odot\sigma_v(\mV_j \parallel \mathrm{RBF}(\mD_{ij}[p]))) \\ 
    \label{eq:updateH}
    \mH'_i[p] &= \mH_i[p] + \sum_{j \in \gN(i)} \beta_{ij} \phi_m(\vm_{ij,p}), \\
    \label{eq:updateX}
    \vec{\mX}'_i[p] &= \vec{\mX}_i[p] + \sum_{j \in \gN(i)} \beta_{ij} (\sigma_m(\vm_{ij,p})\cdot \vec{\vm}_{ij,p}),
  \end{align}
  where, $\parallel$ specifies the concatenation along the second dimension; $\phi_v$,$\phi_m$, $\sigma_v$, and $\sigma_m$ are all MLPs, $\phi_v$ and $\sigma_v$ are applied for each row of the input matrix independently;  $\odot$ computes the element-wise multiplication. It is verified that the shape of the updated variables $\mH_i'$ and $\vec{\mX}_i'$ keeps the same irregardless of the value of the block size $n_j$. In addition, since the attentions $\bm{\alpha}_{ij}$ and $\beta_{ij}$ are E(3)-invariant, the update of $\vec{\mX}_i'$ is E(3)-equivariant. It can also be observed that the update is independent to the atom permutation of each block. We provide detailed proofs in Appendix~\ref{app:e3equi}.

\paragraph{Equivariant Feed-Forward Network}
  This module updates $\mH_i$ and $\vec{\mX}_i$ for each atom individually. We denote each row of $\mH_i$ as $\vh$, and $\vec{\mX}_i$ as $\vec{\vx}$. We first calculate the centroids of the block:
  \begin{align}
      \label{eq:centroid}
      \vh_c = \mathrm{centroid}(\mH_i),
      \qquad
      \vec{\vx}_c = \mathrm{centroid}(\vec{\mX}_i).
  \end{align}
  Then we obtain the relative coordinate $\Delta\vec{\vx}$ between each atom and the centroid to derive corresponding distance representation $\vr$:
  \begin{align}
      \label{eq:dx}
      \Delta\vec{\vx} = \vec{\vx} - \vec{\vx}_c,
      \qquad
      \vr = \mathrm{RBF}(\|\Delta\vec{\vx}\|_2),
  \end{align}
  The centroids and the distance representation are subsequently integrated into the updating process of $\vh$ and $\vec{\vx}$ to let each atom be aware of the geometric context of its block, where $\phi_h, \sigma_x$ are MLPs:
  \begin{align}
      \label{eq:updateh}
      \vh' &= \vh + \phi_h(\vh, \vh_c, \vr), \\
      \label{eq:updatex}
      \vec{\vx}' &= \vec{\vx} + \Delta\vec{\vx}\sigma_x(\vh, \vh_c, \vr),
  \end{align}

\paragraph{Equivariant Layer Normalization}
  Layer normalization is known to stabilize and accelerate the training of deep neural networks~\citep{ba2016layer,vaswani2017attention}. The challenge here is that we need to additionally consider E(3)-equivariance when normalizing the coordinates. To this end, we first extract the centroid of the entire graph as $\E[\vec{\tX}]$, where $\vec{\tX}$ collects the coordinates of all atoms in all blocks. Then we exert layer normalization on the hidden vectors and coordinates of individual atoms as follows:
  \begin{align}
      \label{eq:normh}
      \vh' &= \frac{\vh - \mathbb{E}[\vh]}{\hphantom{11}\sqrt{\mathrm{Var}[\vh]}\hphantom{11}} \cdot \bm{\gamma} + \bm{\beta}, \\
      \label{eq:normx}
      \vec{\vx}' &= \frac{\vec{\vx} - \E[\vec{\tX}]}{\sqrt{\mathrm{Var}[\vec{\tX} - \E[\vec{\tX}]]}} \cdot \sigma + \E[\vec{\tX}],
  \end{align}
  where $\bm{\gamma}, \bm{\beta}$, and $\sigma$ are learnable parameters, and $\mathrm{Var}[\vec{\tX}]$ calculates the variation of all atom coordinates with respect to the centroid. Therefore, the coordinates, after subtracting the centroid of all atoms, are first normalized to standard Gaussian distribution and then scaled with $\sigma$ before recovering the centroid. In addition, to further reflect the rescaling of the coordinates into hidden features, we inject the following update before applying the above layer normalization:
  \begin{align}
      \label{eq:inj}
      \vh &= \vh + \phi_{\mathrm{LN}}(\mathrm{RBF}(\sigma / \sqrt{\mathrm{Var}[\vec{\tX}]})),
  \end{align}
where $\phi_{\mathrm{LN}}$ is an MLP. In contrast to existing literature which only implements layer normalization on E(3)-invariant features~\citep{tholke2022torchmd, liao2022equiformer} or node-wise velocities~\citep{zaidi2022pre}, ours works on both E(3)-invariant features and E(3)-equivariant coordinates.

Thanks to the E(3)-equivariance of each module, GET, which is the cascading of these modules in each layer, also conforms to the symmetry of the 3D world. We provide the proof in Appendix~\ref{app:e3equi} and complexity analysis in Appendix~\ref{app:complexity}.

\begin{table*}[!t]
\centering
\caption{The mean and the standard deviation of three runs on the PDBbind benchmark. The best results are marked in bold and the second best are underlined. The results of baselines are borrowed from~\citet{wang2022learning}. Baselines encoding the complexes with one model are marked with $\ast$.}
\label{tab:pdbbind}
\scalebox{0.8}{
\begin{tabular}{lccc}
\toprule
\multicolumn{1}{c}{Model}                       & RMSE$\downarrow$ & Pearson$\uparrow$& Spearman$\uparrow$ \\ \midrule
DeepDTA~\citep{ozturk2018deepdta}               & $1.866\pm 0.080$ & $0.472\pm 0.022$ & $0.471\pm 0.024$ \\
Bepler and Berger~\citep{bepler2019learning}    & $1.985\pm 0.006$ & $0.165\pm 0.006$ & $0.152\pm 0.024$ \\
TAPE~\citep{rao2019evaluating}                  & $1.890\pm 0.035$ & $0.338\pm 0.044$ & $0.286\pm 0.124$ \\
ProtTrans~\citep{elnaggar13prottrans}           & $1.544\pm 0.015$ & $0.438\pm 0.053$ & $0.434\pm 0.058$ \\
MaSIF~\citep{gainza2020deciphering}             & $1.484\pm 0.018$ & $0.467\pm 0.020$ & $0.455\pm 0.014$ \\
IEConv~\citep{hermosilla2020intrinsic}          & $1.554\pm 0.016$ & $0.414\pm 0.053$ & $0.428\pm 0.032$ \\
Holoprot-Full Surface~\citep{somnath2021multi}  & $1.464\pm 0.006$ & $0.509\pm 0.002$ & $0.500\pm 0.005$ \\
Holoprot-Superpixel~\citep{somnath2021multi}    & $1.491\pm 0.004$ & $0.491\pm 0.014$ & $0.482\pm 0.032$ \\
ProNet-Amino Acid~\citep{wang2022learning}      & $1.455\pm 0.009$ & $0.536\pm 0.012$ & $0.526\pm 0.012$ \\
ProtNet-Backbone~\citep{wang2022learning}       & $1.458\pm 0.003$ & $0.546\pm 0.007$ & $0.550\pm 0.008$ \\
ProtNet-All-Atom~\citep{wang2022learning}       & $1.463\pm 0.001$ &{\ul $0.551\pm 0.005$}& $0.551\pm 0.008$ \\ \midrule
GVP$^\ast$~\citep{jing2021equivariant}          & $1.594\pm 0.073$ & -                &  -     \\
Atom3D-3DCNN$^\ast$~\citep{townshend2020atom3d} &{\ul $1.416\pm 0.021$}& $0.550\pm 0.021$ & {\ul $0.553\pm 0.009$}\\
Atom3D-ENN$^\ast$~\citep{townshend2020atom3d}   & $1.568\pm 0.012$ & $0.389\pm 0.024$ & $0.408\pm 0.021$ \\
Atom3D-GNN$^\ast$~\citep{townshend2020atom3d}   & $1.601\pm 0.048$ & $0.545\pm 0.027$ & $0.533\pm 0.033$ \\
\midrule
GET$^\ast$ (ours)                & $\mathbf{1.364\pm 0.009}$   & $\mathbf{0.596\pm 0.006}$ & $\mathbf{0.573\pm 0.007}$ \\ \bottomrule
\end{tabular}
}
\end{table*}

\section{Experiments}

In this section, we aim to answer the following three questions via empirical experiments: (1) Does modeling complexes with unified representation better captures the geometric interactions than treating each interacting entity independently with domain-specific representations (\textsection~\ref{sec:type_specific})? (2) Is the proposed unified representation more expressive than vanilla single-level representations or pooling-based hierarchical methods (\textsection~\ref{sec:vanilla})? (3) Can the proposed method generalize to different domains by learning the various underlying interaction physics (\textsection~\ref{sec:generalization})?

We conduct experiments on prediction of binding between proteins, small molecules and nucleic acids. Thus, we adopt three widely used metrics for quantitive evaluation~\citep{townshend2020atom3d, liu2021deep, luo2023rotamer, notin2022tranception}: \textbf{RMSE} is the Root Mean Square Error of the predicted value; \textbf{Pearson Correlation}~\citep{cohen2009pearson} measures the linear correlation between the predicted values and the target values; \textbf{Spearman Correlation}~\citep{hauke2011comparison} measures the correlation between the rankings given by the predicted and the target values

\subsection{Comparison to Domain-Specific Representations}
\label{sec:type_specific}

We evaluate our method on the prediction of binding affinity between proteins and small molecules against state-of-the-art two-branch models with domain-specific representations from existing literature~\citep{somnath2021multi, wang2022learning}. We follow~\citet{somnath2021multi, wang2022learning} to conduct experiments on the well-established PDBbind~\citep{wang2004pdbbind, liu2015pdb} and split the dataset (4,709 biomolecular complexes) according to sequence identity of the protein with 30\% as the threshold. Details of the experiments are provided in Appendix~\ref{app:impl}.

\paragraph{Results} Table~\ref{tab:pdbbind} shows that our GET surpasses the baselines by a large margin. Compared to the baselines which encode proteins and small molecules independently with delicately designed domain-specific models, our unified representation enables unified geometric learning with only one model, which better captures the interactive geometric information between the protein and the small molecule. Notably, among models with one encoder~\citep{jing2021equivariant, townshend2020atom3d}, our method also achieves significant improvement since our unified representation retains domain-specific hierarchies instead of decomposing all types of molecules into atomic graphs.

\begin{table*}[t]
    \centering
    \caption{The mean and the standard deviation of three runs on PPA and LBA prediction. The best results are marked in bold and the second best are underlined. Baselines that fail to process atomic graphs due to high complexity are marked with OOM (out of memory).}
    \label{tab:ppa_lba}
    \scalebox{0.8}{
    \begin{tabular}{cccccccc}
    \toprule
    \multirow{2}{*}{Repr.}        & \multirow{2}{*}{Model}  & \multicolumn{2}{c}{PPA}                                                   &  & \multicolumn{3}{c}{LBA}                                                                       \\ \cline{3-4} \cline{6-8} 
                                  &                         & Pearson$\uparrow$                   & Spearman$\uparrow$                  &  & RMSE$\downarrow$              & Pearson$\uparrow$             & Spearman$\uparrow$            \\ \midrule
    \multirow{8}{*}{Block}        & SchNet                  & $0.439\pm 0.016$                    & $0.427\pm 0.012$                    &  & $1.406\pm 0.020$              & $0.565\pm 0.006$              & $0.549\pm 0.007$              \\
                                  & DimeNet++               & $0.323\pm 0.025$                    & $0.317\pm 0.031$                    &  & $1.391\pm 0.020$              & $0.576\pm 0.016$              & $0.569\pm 0.016$              \\
                                  & EGNN                    & $0.381\pm 0.021$                    & $0.382\pm 0.022$                    &  & $1.409\pm 0.015$              & $0.566\pm 0.010$              & $0.548\pm 0.012$              \\
                                  & ET                      & $0.424\pm 0.021$                    & $0.415\pm 0.027$                    &  & $1.367\pm 0.037$              & $0.599\pm 0.017$              & $0.584\pm 0.025$              \\
                                  & GemNet                  & $0.387\pm 0.023$                    & $0.393\pm 0.027$                    &  & $1.393\pm 0.036$              & $0.569\pm 0.027$              & $0.553\pm 0.026$              \\
                                  & MACE                    & $0.470\pm 0.015$                    & $0.466\pm 0.011$                    &  & $1.385\pm 0.006$              & $0.599\pm 0.010$              & $0.580\pm 0.014$              \\
                                  & Equiformer              & {\ul $0.484\pm 0.007$}              & {\ul $0.496\pm 0.007$}              &  & $1.350\pm 0.019$              & $0.604\pm 0.013$              & $0.591\pm 0.012$              \\
                                  & LEFTNet                 & $0.452\pm 0.013$                    & $0.452\pm 0.013$                    &  & $1.377\pm 0.013$              & $0.588\pm 0.011$              & $0.576\pm 0.010$              \\
                                  \midrule
    \multirow{8}{*}{Atom}         & SchNet                  & $0.369\pm 0.007$                    & $0.404\pm 0.016$                    &  & $1.357\pm 0.017$              & $0.598\pm 0.011$              & $0.592\pm 0.015$              \\
                                  & DimeNet++               & OOM                                 & OOM                                 &  & $1.439\pm 0.036$              & $0.547\pm 0.015$              & $0.536\pm 0.016$              \\
                                  & EGNN                    & $0.302\pm 0.010$                    & $0.349\pm 0.009$                    &  & $1.358\pm 0.000$              & $0.599\pm 0.002$              & $0.587\pm 0.004$              \\
                                  & ET                      & $0.401\pm 0.005$                    & $0.436\pm 0.004$                    &  & $1.381\pm 0.013$              & $0.591\pm 0.007$              & $0.583\pm 0.009$              \\
                                  & GemNet                  & OOM                                 & OOM                                 &  & OOM                           & OOM                           & OOM                                 \\
                                  & MACE                    & $0.463\pm 0.052$                    & $0.449\pm 0.052$                    &  & $1.411\pm 0.029$              & $0.579\pm 0.009$              & $0.563\pm 0.012$              \\
                                  & Equiformer              & OOM                                 & OOM                                 &  & OOM                           & OOM                           & OOM                                 \\
                                  & LEFTNet                 & $0.448\pm 0.046$                    & $0.431\pm 0.046$                    &  & $1.343\pm 0.004$              & $0.610\pm 0.004$              & $0.598\pm 0.003$              \\
                                  \midrule
    \multirow{8}{*}{Hierarchical} & SchNet                  & $0.438\pm 0.017$                    & $0.424\pm 0.016$                    &  & $1.370\pm 0.028$              & $0.590\pm 0.017$              & $0.571\pm 0.028$              \\
                                  & DimeNet++               & OOM                                 & OOM                                 &  & $1.388\pm 0.010$              & $0.582\pm 0.009$              & $0.574\pm 0.007$              \\
                                  & EGNN                    & $0.386\pm 0.021$                    & $0.390\pm 0.016$                    &  & $1.380\pm 0.015$              & $0.586\pm 0.004$              & $0.568\pm 0.004$              \\
                                  & ET                      & $0.401\pm 0.005$                    & $0.438\pm 0.029$                    &  & $1.383\pm 0.009$              & $0.580\pm 0.008$              & $0.564\pm 0.004$              \\
                                  & GemNet                  & OOM                                 & OOM                                 &  & OOM                           & OOM                           & OOM                                 \\
                                  & MACE                    & $0.466\pm 0.020$                    & $0.470\pm 0.016$                    &  & $1.372\pm 0.021$              & $0.612\pm 0.010$              & $0.592\pm 0.010$              \\
                                  & Equiformer              & OOM                                 & OOM                                 &  & OOM                           & OOM                           & OOM                                 \\
                                  & LEFTNet                 & $0.445\pm 0.024$                    & $0.446\pm 0.029$                    &  & $1.366\pm 0.016$              & $0.592\pm 0.014$              & $0.580\pm 0.011$              \\
                                  \midrule
    \multirow{2}{*}{Unified}      & GET (ours)              & $\mathbf{0.514\pm 0.011}$           & $\mathbf{0.533\pm 0.011}$           &  & {\ul $1.327\pm 0.005$}        & {\ul $0.620\pm 0.004$}        & {\ul $0.611\pm 0.003$}        \\
                                  & GET-PS (ours)           & -                                   & -                                   &  & $\mathbf{1.309\pm 0.012}$     & $\mathbf{0.633\pm 0.008}$     & $\mathbf{0.642\pm 0.009}$     \\ \bottomrule
    \end{tabular}
    }
    \vskip -0.1in
\end{table*}

\subsection{Comparison to Vanilla Unified Representations}
\label{sec:vanilla}
Next, we compare the proposed unified representation with three vanilla unified representations: (1) \textbf{Block}-level methods assign each building block to one node where the definition of building block is domain-specific (\emph{e.g.} each residue in the proteins is one node); (2) \textbf{Atom}-level methods treats all kinds of molecules as graphs of atoms; (3) \textbf{Hierarchical} methods first implement message passing on atom-level graphs, then obtain the block-level representations by pooling for further message passing on the block-level graphs~\citep{jin2022antibody}.

\paragraph{Baselines} These vanilla unified representations are compatible with most geometric graph models in existing literature, thus we adopt the following representative models as backbone for comparison. \textbf{SchNet}~\citep{schutt2017schnet}, \textbf{DimeNet++}~\citep{gasteiger2020directional, gasteiger2020fast}, and \textbf{GemNet}~\citep{gasteiger2021gemnet} build invariant models based on invariant geometric features (\emph{i.e.} distances and angles). \textbf{EGNN}~\citep{satorras2021egnn}, TorchMD-Net (\textbf{ET})~\citep{tholke2022torchmd}, and \textbf{LEFTNet}~\citep{du2023new} keep track of equivariant features and are implemented directly on 3D coordinates via scalarization~\citep{han2022geometrically}. \textbf{MACE}~\citep{batatia2022mace} and \textbf{Equiformer}~\citep{liao2022equiformer} leverage spherical harmonics and irreducible representations~\citep{thomas2018tensor} to compose equivariant models.

\paragraph{Dataset} To this end, we evaluate the models on prediction of protein-protein affinity and ligand-binding affinity. For \textbf{Protein-Protein Affinity} (PPA), we adopt the Protein-Protein Affinity Benchmark Version 2~\citep{kastritis2011structure, vreven2015updates} as the test set, which categorizes 176 diversified protein-protein complexes into three difficulty levels (\emph{i.e.} Rigid, Medium, Flexible) according to the conformation change of the proteins from the unbound to the bound state~\citep{kastritis2011structure}. The Flexible split is the most challenging as the proteins undergo large conformation change upon binding.
As for training, we obtain 2,500 complexes with annotated binding affinity ($K_i$ or $K_d$) from PDBbind~\citep{wang2004pdbbind} and split the dataset according to sequence identity on a threshold of 30\%. For \textbf{Ligand-Binding Affinity} (LBA), we use the LBA dataset and its splits in Atom3D benchmark~\citep{townshend2020atom3d}, where there are 3507, 466, and 490 complexes in the training, the validation, and the test sets. Although this dataset overlaps with the PDBbind benchamrk in terms of the included complexes, the different data processing methods lead to deviations in model performance from the PDBbind benchmark. Thus, for fair comparison with previous literature, we follow their settings to use PDBbind in \textsection~\ref{sec:type_specific} and Atom3D here. Details are provided in Appendix~\ref{app:impl}.

\paragraph{Results} We report the mean and the standard deviation of the metrics across 3 runs for PPA and LBA in Table~\ref{tab:ppa_lba}. Details on different difficulty levels for PPA are included in Appendix~\ref{app:ppa} due to the space limit.
Inspiringly, it reads that our GET with the proposed unified representation achieves significantly better performance compared with the baselines with either single-level representations or hierarchical pooling, no matter the interacting partners are macro molecules (\emph{i.e.} proteins) or small molecules. This confirms the superiority of our method, which comes from a desirable integration of different granularities. Further, to show the flexibility of the proposed unified representation, as mentioned in \textsection~\ref{sec:repr}, we add \textbf{GET-PS}, which defines the blocks in small molecules as principal subgraphs~\citep{kong2022molecule} instead of atoms. GET-PS receives obvious gains over GET since fragments in small molecules usually contribute to interactions as a whole~\citep{hajduk2007decade}.

\subsection{Generalization Across Different Domains}
\label{sec:generalization}
Finally, we explore whether our model is able to find various underlying physics that can generalize across different domains by mix-domain training and zero-shot testing.

\paragraph{Data Augmentation from Different Domains} We mix the dataset of protein-protein affinity and protein-ligand affinity for training, and evaluate the models on the test set of the two domains, respectively. We also benchmark ET, MACE, and LEFTNet, which exhibit competitive performance and efficiency in \textsection~\ref{sec:vanilla}, under the same setting for comparison. We present the results in Table~\ref{tab:mix_abbr}, and include detailed results in Appendix~\ref{app:mix}. The results demonstrate that our method obtains benefits from the mixed training set on both PPA and LBA, while the baselines receive negative impact in most cases. Additional results on mixing PDBbind and PPA are provided in Table~\ref{tab:mix_pdbbind}, where our GET exhibits significant improvement from the mixed training set. These phenomena well demonstrate the generalization ability of the proposed GET equipped with the unified representation.

\begin{table}[htbp]
    \centering
    \caption{The mean and the standard deviation of Pearson Correlation from three runs on protein-protein affinity (PPA) and ligand binding affinity (LBA). Methods with the suffix "-mix" are trained on the mixed dataset of PPA and LBA. The best results are marked in bold and the second best are underlined.}
    \label{tab:mix_abbr}
    \scalebox{0.8}{
    \begin{tabular}{cccc}
    \toprule
    Repr.                    & Model          & PPA-All                   & LBA                       \\ \midrule
    \multirow{6}{*}{Block}   & ET             & $0.424\pm 0.021$          & $0.599\pm 0.017$          \\
                             & ET-mix         & $0.457\pm 0.011$          & $0.586\pm 0.012$          \\
                             & MACE           & $0.470\pm 0.015$          & $0.599\pm 0.010$          \\
                             & MACE-mix       & $0.372\pm 0.042$          & $0.590\pm 0.018$          \\
                             & LEFTNet        & $0.452\pm 0.013$          & $0.588\pm 0.011$          \\
                             & LEFTNet-mix    & $0.450\pm 0.008$          & $0.543\pm 0.005$          \\ \midrule
    \multirow{6}{*}{Atom}    & ET             & $0.401\pm 0.005$          & $0.591\pm 0.007$          \\
                             & ET-mix         & $0.382\pm 0.029$          & $0.566\pm 0.061$          \\
                             & MACE           & $0.463\pm 0.052$          & $0.579\pm 0.009$          \\
                             & MACE-mix       & $0.444\pm 0.024$          & $0.580\pm 0.030$          \\
                             & LEFTNet        & $0.448\pm 0.046$          & $0.610\pm 0.004$          \\
                             & LEFTNet-mix    & $0.476\pm 0.023$          & $0.579\pm 0.014$          \\ \midrule
    \multirow{6}{*}{Hier.}   & ET             & $0.438\pm 0.026$          & $0.580\pm 0.008$          \\
                             & ET-mix         & $0.412\pm 0.035$          & $0.569\pm 0.017$          \\
                             & MACE           & $0.466\pm 0.020$          & $0.612\pm 0.010$          \\
                             & MACE-mix       & $0.324\pm 0.076$          & $0.588\pm 0.011$          \\
                             & LEFTNet        & $0.445\pm 0.024$          & $0.592\pm 0.014$          \\
                             & LEFTNet-mix    & $0.472\pm 0.020$          & $0.556\pm 0.001$          \\ \midrule
    \multirow{2}{*}{Unified} & GET (ours)     & {\ul $0.514\pm 0.011$}    & {\ul $0.620\pm 0.004$}    \\
                             & GET-mix (ours) & $\mathbf{0.519\pm 0.004}$ & $\mathbf{0.622\pm 0.006}$ \\ \bottomrule
    \end{tabular}
    }
    \vskip -0.1in
\end{table}


\paragraph{Zero-Shot Prediction of DNA/RNA-Ligand Affinity} A more practical and meaningful, yet also more challenging scenario is ligand (small molecule) binding on nucleic acids (RNA/DNA), the data of which are scarce and expensive to obtain. We use the 149 data points available in PDBbind~\citep{wang2004pdbbind} as the zero-shot test set, and train a model on binding data from other domains in PDBbind (\emph{i.e.} protein-protein, protein-ligand and RNA/DNA-protein). Table~\ref{tab:zeroshot} show that GET exhibits remarkable zero-shot performance, achieving amazing generalizability across different domains on molecular interaction.

\begin{table}[htbp]
    \vskip -0.1in
    \centering
    \caption{Zero-shot performance on DNA/RNA-ligand binding affinity prediction across three runs. The best results are marked in bold and the second best are underlined.}
    \label{tab:zeroshot}
    \scalebox{0.8}{
    \begin{tabular}{cccc}
    \toprule
    Repr.                         & Model       
    & Pearson$\uparrow$     & Spearman$\uparrow$      \\ \midrule
    \multirow{4}{*}{Block}        
                                  & ET          
                                  &  $0.217\pm 0.059$   &  $0.185\pm 0.051$    \\ 
                                  & MACE                
                                  &  $0.004\pm 0.045$   &  $0.045\pm 0.034$    \\ 
                                  & LEFTNet             
                                  &  $0.279\pm 0.127$   &  $0.252\pm 0.082$    \\ 
                                  \midrule
    \multirow{3}{*}{Atom}         
                                  & ET          
                                  &  $0.150\pm 0.034$   &  $0.198\pm 0.043$    \\ 
                                  & MACE                
                                  &  $-0.005\pm 0.079$   &  $0.027\pm 0.083$    \\ 
                                  & LEFTNet             
                                  &  $0.271\pm 0.062$   &  $0.279\pm 0.062$    \\ 
                                  \midrule
    \multirow{3}{*}{Hierarchical} 
                                  & ET          
                                  &  $0.348\pm 0.047$   &  $0.302\pm 0.028$    \\ 
                                  & MACE                
                                  &  $0.002\pm 0.055$   &  $0.041\pm 0.030$    \\ 
                                  & LEFTNet             
                                  &  $0.279\pm 0.122$   &  $0.259\pm 0.077$    \\ 
                                  \midrule
    Unified                       & GET   
    & $\mathbf{0.450\pm0.054}$ & $\mathbf{0.362\pm0.041}$ \\ \bottomrule
    \end{tabular}
    }
    \vskip -0.1in
\end{table}

Both experiments confirm the potential of our model to discover various underlying principles of molecular interactions capable of generalizing across diverse domains. Additionally, we also validate the superiority of GET on tasks concerning bare molecules by experiments on protein property prediction. These results, although beyond the primary scope of this paper, are provided in Appendix~\ref{app:ec} for readers interested in this topic.

\begin{table}[t]
    \centering
    \caption{The mean and the standard deviation of three runs on PDBbind benchmark. GET-mix is trained on the mixed dataset of PPA and PDBbind. The best results are marked in bold and the second best are underlined.}
    \label{tab:mix_pdbbind}
    \scalebox{0.75}{
    \begin{tabular}{cccc}
    \toprule
    Model   & RMSE$\downarrow$ & Pearson$\uparrow$ & Spearman$\uparrow$ \\ \midrule
    Holoprot-Full Surface  & $1.464\pm 0.006$ & $0.509\pm 0.002$ & $0.500\pm 0.005$ \\
    Holoprot-Superpixel    & $1.491\pm 0.004$ & $0.491\pm 0.014$ & $0.482\pm 0.032$ \\
    ProNet-Amino Acid      & $1.455\pm 0.009$ & $0.536\pm 0.012$ & $0.526\pm 0.012$ \\
    ProtNet-Backbone       & $1.458\pm 0.003$ & $0.546\pm 0.007$ & $0.550\pm 0.008$ \\
    ProtNet-All-Atom       & $1.463\pm 0.001$ &{\ul $0.551\pm 0.005$}& $0.551\pm 0.008$ \\ \midrule
    GET (ours)    & $1.364\pm 0.009$ & $0.596\pm 0.006$  & $0.573\pm 0.007$   \\
    GET-mix (ours) & $1.350\pm 0.025$ & $0.604\pm 0.014$  & $0.600\pm 0.025$   \\ \bottomrule
    \end{tabular}
    }
    \vskip -0.2in
\end{table}

\section{Analysis}

\subsection{Generalization to Low-Quality Structures}
\label{sec:low_quality}
In many real-world scenarios, the 3D structures of the molecular complexes are not available. While they might be obtained from structure prediction models (e.g. AlphaFold~\citep{jumper2021highly}), the predicted results are likely to contain errors. To test the generalization of our GET under such scenarios, we add noise of different scales on the crystal structures to mimic the predicted structures of different quality. We conduct experiments on Ligand-Binding Affinity (LBA) prediction, and present the results in Table~\ref{tab:noisy}. 

\begin{table}[htbp]
    \vskip -0.1in
    \centering
    \caption{Results on Ligand-Binding Affinity (LBA) prediction under different scales of structural error. The mean and standard deviations are reported across three parallel runs.}
    \label{tab:noisy}
    \scalebox{0.8}{
    \begin{tabular}{cccc}
    \toprule
    Error Scale (\AA) & RMSE$\downarrow$& Pearson$\uparrow$& Spearman$\uparrow$\\ \midrule
    0.0               & $1.327\pm0.005$ & $0.620\pm0.004$ & $0.611\pm0.003$ \\
    0.1               & $1.331\pm0.006$ & $0.618\pm0.003$ & $0.607\pm0.006$ \\
    0.5               & $1.334\pm0.006$ & $0.616\pm0.003$ & $0.605\pm0.004$ \\
    1.0               & $1.328\pm0.005$ & $0.619\pm0.003$ & $0.608\pm0.004$ \\
    2.0               & $1.338\pm0.009$ & $0.614\pm0.005$ & $0.604\pm0.005$ \\
    3.0               & $1.342\pm0.004$ & $0.610\pm0.003$ & $0.600\pm0.004$ \\
    5.0               & $1.375\pm0.006$ & $0.592\pm0.003$ & $0.587\pm0.004$ \\ \bottomrule
    \end{tabular}
    }
    \vskip -0.1in
\end{table}

It reads that the model achieves relatively stable performance within a 3.0 \AA coordinate error range. However, larger errors may lead to a further drop in performance. Given that current commercial softwares claim to be able to predict complex structures within 2.0 \AA~with above 90\% success rate~\citep{friesner2004glide, su2018comparative}, we conclude that the framework has considerable potential to generalize to settings where the 3D structures are inferred computationally. Nevertheless, we have to admit that this is much more challenging for other types of complexes (\emph{e.g.} protein-protein), as their binding structures are more difficult to be predicted precisely.

\subsection{Ablation Study}
  We conduct ablation study by removing the following modules: the layer normalization (w/o LN); the equivariant normalization on coordinates in the LN (w/o equivLN); the reflection of rescaling information in hidden features in Eq.~\ref{eq:inj} (w/o EmbedScale); the equivariant feed-forward network (w/o FFN); both LN and FFN (w/o LN \& FFN). The results are presented in Table~\ref{tab:ablation}.
  

    \begin{table}[htbp]
    \vskip -0.1in
    \centering
    \caption{Ablation study of each module in our proposed Generalist Equivariant Transformer (GET), where LN and FFN are abbreviations for LayerNorm and Feed-Forward Network, respectively. We mark the best results in bold and underline the second best results.}
    \label{tab:ablation}
    \scalebox{0.8}{
    \begin{tabular}{lcc}
    \toprule
    \multicolumn{1}{c}{Model} & Pearson$\uparrow$         & Spearman$\uparrow$        \\ \hline
    \rowcolor{black!5}\multicolumn{3}{c}{PPA-All}                                                       \\ \hline
    GET-mix                   & $\mathbf{0.519\pm 0.004}$ & $\mathbf{0.537\pm 0.003}$ \\
    GET                       & {\ul $0.514\pm 0.011$}    & {\ul $0.533\pm 0.011$}    \\
    w/o LN                    & $0.366\pm 0.024$          & $0.426\pm 0.032$          \\
    w/o equivLN               & $0.368\pm 0.025$          & $0.426\pm 0.032$          \\
    w/o EmbedScale            & $0.490\pm 0.027$          & $0.507\pm 0.030$          \\
    w/o FFN                   & $0.494\pm 0.010$          & $0.510\pm 0.010$          \\
    w/o LN \& FFN             & $0.360\pm 0.018$          & $0.423\pm 0.024$          \\ \hline
    \rowcolor{black!5}\multicolumn{3}{c}{LBA}                                                           \\ \hline
    GET-mix                   & $\mathbf{0.622\pm 0.006}$ & $\mathbf{0.615\pm 0.008}$ \\
    GET                       & {\ul $0.620\pm 0.004$}    & {\ul $0.611\pm 0.003$}    \\
    w/o LN                    & $0.589\pm 0.007$          & $0.593\pm 0.008$          \\
    w/o equivLN               & $0.591\pm 0.008$          & $0.597\pm 0.009$          \\
    w/o EmbedScale            & $0.591\pm 0.002$          & $0.586\pm 0.003$          \\
    w/o FFN                   & $0.593\pm 0.008$          & $0.601\pm 0.012$          \\
    w/o LN \& FFN             & $0.589\pm 0.009$          & $0.596\pm 0.008$          \\ \bottomrule
    \end{tabular}
    }
    \end{table}
  
  The ablations of the modules reveal following regularities: (1) Removing either the entire layer normalization or only the equivariant normalization on coordinates introduces instability in training, which not only leads to higher variance across different experiments, but also induces adverse impacts in some tasks like PPA; (2) Not reflecting the rescaling information in the hidden features has an adverse effect on the performance as the scale of the coordinates also carries essential information for learning the interaction physics; (3) The removal of the equivariant feed-forword module incurs detriment to the overall performance, indicating the necessity of the FFN to encourage intra-block geometrical communications between atoms. We additionally include analysis on attention in Appendix~\ref{app:att}. Further sensitivities on hidden size, number of layers, and K-nearest neighbors are included in Appendix~\ref{app:sen_width_depth} and~\ref{app:sen_kneighbors}.


\vspace{-0.1in}
\section{Limitations}
  First, current evaluations mainly focus on prediction tasks in molecular interactions. Generative tasks are another major branch in learning molecular interactions~\citep{luo20213d, liu2022generating, peng2022pocket2mol}. Designing generative algorithms for the proposed unified representation is non-trivial, and we leave this for future work.
  Further, it is also possible to generalize atom-level knowledge in other scenarios apart from molecular interactions (\emph{e.g.} tasks concerning bare molecules). For instance, universal pretraining on different molecular domains, which needs careful design of the unsupervised task, hence we also leave this for future work.
  
\vspace{-0.1in}
\section{Conclusion}
In this paper, we explore the unified representation of molecules as geometric graphs of sets, which enables all-atom representations while preserving the heuristic building blocks of different molecules. To model the unified representaion, we propose a Generalist Equivariant Transformer (GET) to accommodate matrix-form node features and coordinates with E(3)-equivariance and permutation invariance. Each layer of GET consists of a bilevel attention module, a feed-forward module, and an equivariant layer normalization after each of the previous two modules. Experiments on molecular interactions demonstrate the superiority of learning unified representation with our GET compared to single-level representations and existing baselines. Further explorations on mixing molecular types reveal the ability of our method to learn generalizable molecular interaction mechanisms, which could inspire future research on universal representation learning of molecules.

\vspace{-0.1in}
\section*{Software and Data}
Codes for our GET as well as the experiments are available at \url{https://github.com/THUNLP-MT/GET}.

\vspace{-0.1in}
\section*{Impact Statement}
Our work on unified molecular representation learning has practical implications across various domains, including drug discovery, biotechnology, as well as advancing the design of Transformers~\cite{zhao2023transformer,meng2023offline,xu2024dual}. By introducing a unified model capable of accommodating data from different molecular domains, we address the challenge of data scarcity and enhance predictive performance in specific domains suffering from limited data availability. For instance, in affinity prediction, our model leverages abundant data from diverse domains to enhance predictive performance, as demonstrated in our zero-shot experiments on RNA-ligand affinity prediction. This adaptability to new domains without specific training data showcases the practical utility of our approach in scenarios where data is limited. We include more discussions in Appendix~\ref{app:discuss}. As for societal consequences, we feel none must be specifically highlighted here.

\vspace{-0.1in}
\section*{Acknowledgments}
This work is jointly supported by the National Science and Technology Major Project under Grant 2020AAA0107300, the National Natural Science Foundation of China (No. 61925601, No. 62376276), Beijing Nova Program (20230484278).

\bibliography{example_paper}
\bibliographystyle{icml2024}

\newpage
\appendix
\onecolumn
\section{Atom Position Code}
\label{app:atom_pos}
Certain types of molecules have conventional position codes to distinguish different status of the atoms in the same block. For example, in the protein domain, where building blocks are residues, each atom in a residue is assigned a position code ($\alpha, \beta, \gamma, \delta, \varepsilon, \zeta, \eta, ...$) according to the number of chemical bonds between it and the alpha carbon (i.e. $C_\alpha$). As these position codes provide meaningful heuristics of intra-block geometry, we also include them as a component of the embedding. For other types of molecules without such position codes (e.g. small molecules), we assign a $\mathrm{[BLANK]}$ type for positional embedding.

\section{Scheme of the Generalist Equivariant Transformer}
We depict the overall workflow and the details of the equivariant bilevel attention module in Figure~\ref{fig:model}.
\label{app:model}
  \begin{figure}[htbp]
      \centering
      \vskip -0.1in
      \includegraphics[width=.9\textwidth]{./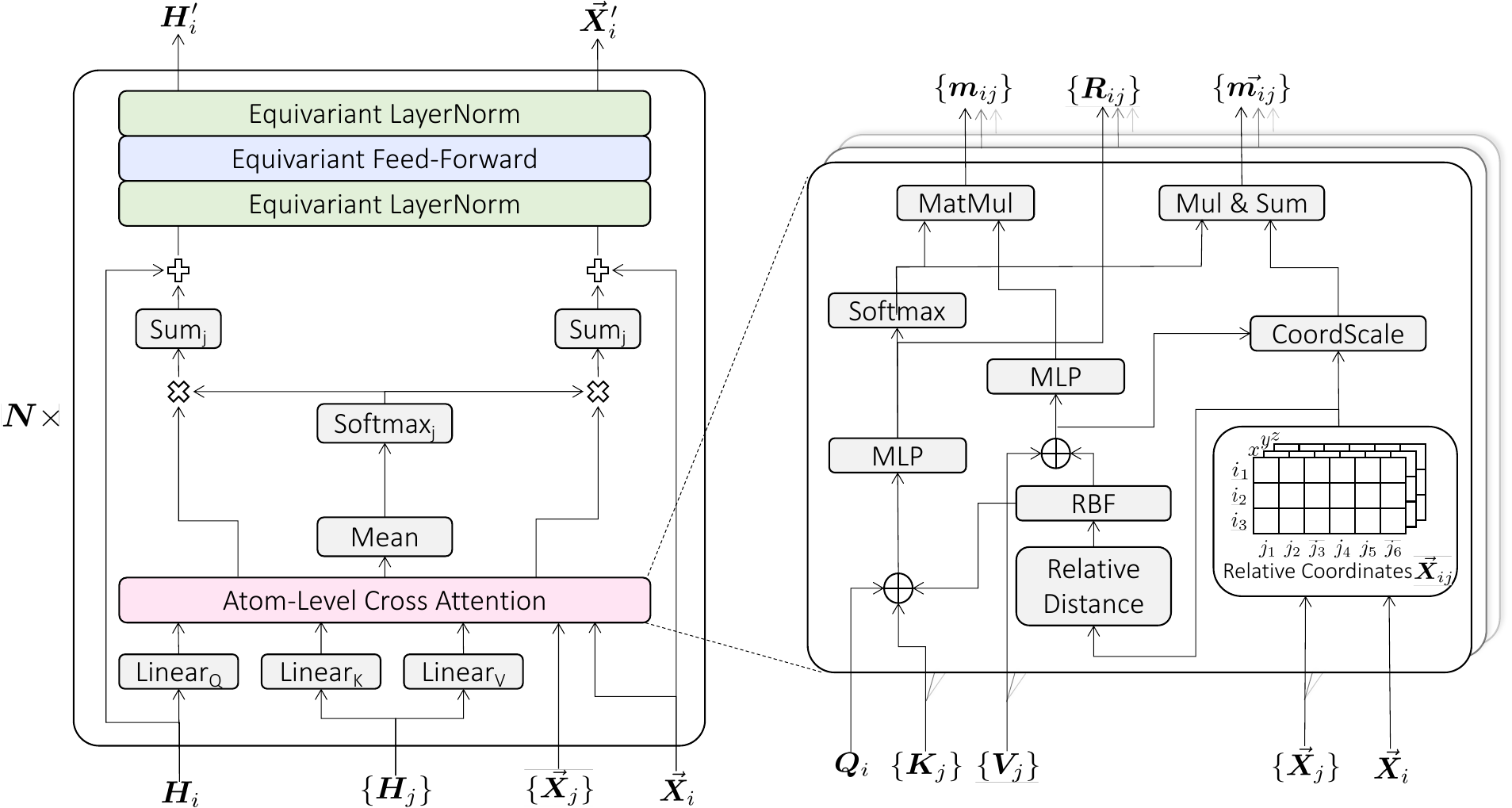}
      \vskip -0.1in
      \caption{The scheme of a layer of Generalist Equivariant Transformer, where block $i$ ($\bm{H}_i$, $\vec{\bm{X}}_i$) is updated by its neighbors ($\{\bm{H}_j\}$, $\{\vec{\bm{X}}_j\}$, $j\in \gN_b(i)$). $\times$, $+$, and $\oplus$ denote multiplication, addition and concatenation, respectively. (Left) The overall workflow of a layer and details of the block-level attention. (Right) The details of the atom-level cross attention. GET is composed of $N$ such layers.}
      \label{fig:model}
      \vskip -0.2in
  \end{figure}
  
  RBF embeds distances into $d_{\text{rbf}}$-dimensional vectors through radial basis functions:
  \begin{align}
    \mathrm{RBF}(d)_k = u(\frac{d}{c})\cdot \exp(-\frac{|d - \mu_k|}{2 \gamma}), 1\leq k \leq d_{\text{rbf}}
  \end{align}
  where $\mu_k$ is uniformly distributed in $[0, c]$ ($c=7.0$ in our paper), $\gamma = \frac{c}{d_{\text{rbf}}}$, and $u(d)$ is the polynomial envelope function for cutting off large distances~\citep{gasteiger2020directional} ($p=5$ in our paper):
  \begin{align}
      u(x) = 1 - \frac{(p+1)(p+2)}{2} x^p + p(p+2)x^{p+1} - \frac{p(p+1)}{2}x^{p+2}
  \end{align}

\section{Proof of E(3)-Equivariance and Intra-Block Permutation Invariance}
\label{app:e3equi}
\begin{theorem}[E(3)-Equivariance and Intra-Block Permutation Invariance]
    Denote the proposed Equivariant Transformer as $\{\mH_i', \vec{\mX}_i'\} = \mathrm{GET}(\{\mH_i, \vec{\mX}_i\})$, then it conforms to E(3)-Equivariance and Intra-Block Permutation Invariance. Namely, $\forall g \in \text{E(3)}, \forall \{\pi_i \in S_{n_i} | 1 \leq i \leq B\}$, where $B$ is the number of blocks in the input and $S_{n_i}$ includes all permutations on $n_i$ elements, we have $\{\pi_i\cdot \mH_i', \pi_i \cdot g\cdot \vec{\mX}_i'\} = \mathrm{GET}(\{\pi_i \cdot \mH_i, \pi_i \cdot g\cdot \vec{\mX}_i\})$.
\end{theorem}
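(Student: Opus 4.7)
The plan is to proceed modularly: since $\mathrm{GET}$ is the sequential composition of the bilevel attention module, the feed-forward network, and the equivariant layer normalization, and since the set of maps that are simultaneously E(3)-equivariant and intra-block permutation equivariant is closed under composition, it suffices to establish the claim for each module individually. I write any $g\in\text{E(3)}$ as $g\cdot\vec{\vx} = O\vec{\vx}+\vt$ with $O\in O(3)$ and $\vt\in\mathbb{R}^3$, note that $g$ acts on $\vec{\mX}_i$ row-wise, and that the permutation $\pi_i\in S_{n_i}$ acts on $\mH_i$ and $\vec{\mX}_i$ by reordering rows. The two actions commute, so I can handle them together once I verify both kinds of symmetry for every atomic operation.

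For the bilevel attention module I would proceed in the order the quantities are defined. The linear projections in Eq.~\ref{eq:qkv} are row-wise, hence automatically permutation-equivariant and E(3)-invariant (they act only on $\mH_i$). The relative coordinates $\vec{\mX}_{ij}[p,q]=\vec{\mX}_i[p]-\vec{\mX}_j[q]$ are translation-invariant because the translations cancel in the subtraction, and rotation/reflection-equivariant because $O$ factors out; consequently $\mD_{ij}$ is fully E(3)-invariant. From this the scalar scores $\mR_{ij}$, $\bm{\alpha}_{ij}$, $r_{ij}$, and $\beta_{ij}$ are E(3)-invariant, while their dependence on the row indices transforms as $(\pi_i,\pi_j)$ permutes the two axes of $\mR_{ij}$ (note that $r_{ij}$ further averages over these axes, so it is actually permutation-invariant in $\pi_i,\pi_j$). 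Plugging these into Eq.~\ref{eq:updateH}, the update of $\mH_i'[p]$ is a sum over $j$ of invariant scalars times $\mV_j$-rows, and a reindexing $q\mapsto\pi_j(q)$ confirms permutation-equivariance. For Eq.~\ref{eq:updateX}, the coefficients multiplying $\vec{\mX}_{ij}[p,q]$ are all E(3)-invariant, so the entire correction transforms like $\vec{\mX}_{ij}$ itself (translation-invariant, rotation-equivariant); adding it to $\vec{\mX}_i[p]$, which is E(3)-equivariant, yields an E(3)-equivariant $\vec{\mX}_i'[p]$, with the row index again permuted by $\pi_i$.

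For the feed-forward module, the centroids in Eq.~\ref{eq:centroid} are permutation-invariant and $\vec{\vx}_c$ is E(3)-equivariant. Then $\Delta\vec{\vx}=\vec{\vx}-\vec{\vx}_c$ is translation-invariant and $O$-equivariant, and $\Delta\vec{\vx}^\top\Delta\vec{\vx}=\|\Delta\vec{\vx}\|^2$ is a full E(3)-invariant scalar, which makes $\vr$ invariant. Hence in Eq.~\ref{eq:updateh} $\vh'$ is updated by invariants only, and in Eq.~\ref{eq:updatex} the coordinate increment $\Delta\vec{\vx}\,\phi_x(\vh,\vh_c,\vr)$ is translation-invariant and $O$-equivariant, so $\vec{\vx}'$ remains E(3)-equivariant. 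For the layer normalization, the global centroid $\E[\vec{\tX}]$ is E(3)-equivariant and permutation-invariant; the difference $\vec{\vx}-\E[\vec{\tX}]$ is translation-invariant and $O$-equivariant; and $\mathrm{Var}[\vec{\tX}-\E[\vec{\tX}]]$, being an average of squared norms of translation-invariant vectors, is a full E(3)-invariant scalar. Substituting back into Eq.~\ref{eq:normx} and factoring, the translations cancel out and $O$ factors through the standardized term, giving E(3)-equivariance of $\vec{\vx}$; Eq.~\ref{eq:normh} is trivially invariant since $\vh$ carries no spatial index.

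The only genuinely subtle step I anticipate is the layer normalization for coordinates, specifically verifying that $\mathrm{Var}[\vec{\tX}-\E[\vec{\tX}]]$ is interpreted as a scalar (mean squared norm, not a per-axis variance) so that it is truly $O(3)$-invariant rather than only rotation-equivariant as a matrix; once that interpretation is fixed the linearity of the operation closes the argument. After module-wise verification, I would conclude by noting that the composition of equivariant maps with a common symmetry group yields an equivariant map, so stacking $N$ such layers preserves both E(3)-equivariance and intra-block permutation equivariance, which is exactly the statement of the theorem.
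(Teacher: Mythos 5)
Your proposal is correct and follows essentially the same route as the paper: module-wise verification that the bilevel attention, feed-forward, and layer-normalization maps are each E(3)-equivariant (via invariance of the scalar quantities $\mR_{ij}$, $\bm{\alpha}_{ij}$, $\beta_{ij}$, $\vr$, and the variance term, plus equivariance of the relative-coordinate corrections) and intra-block permutation equivariant (the paper phrases this with explicit permutation matrices $\mP_i$ satisfying $\mP_i^\top\mP_i=\mI$, which is your reindexing argument in matrix form), followed by closure under composition. Your flagged subtlety about $\mathrm{Var}[\vec{\tX}-\E[\vec{\tX}]]$ being a rotation-invariant scalar is exactly the point the paper's Lemma on layer normalization verifies by direct computation.
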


The Generalist Equivariant Transformer (GET) is the cascading of the three types of modules: bilevel attention, feed-forward network, and layer normalization. Further, the E(3)-equivariance and the intra-block permutation invariance are disentangled. Therefore, the proof of its E(3)-equivariance and its invariance respect to the intra-block permutations can be decomposed into proof of these two properties on each module, which we present below.

\subsection{Proof of E(3)-Equivariance}
First we give the definition of E(3)-equivariance as follows:
\begin{definition}[E(3)-equivariance]
  A function $\phi: \sX \rightarrow \sY$ conforms E(3)-equivariance if $\forall g \in \text{E(3)}$, the equation $\rho_\sY(g) \vy = \phi(\rho_\sX(g) \vx)$ holds true, where $\rho_\sX$ and $\rho_\sY$ instantiate $g$ in $\sX$ and $\sY$, respectively. A special case is E(3)-invariance where $\rho_\sY$ constantly outputs identity transformation (i.e. $\rho_\sY(g) \equiv I$).
\end{definition}

Given $g \in \text{E(3)}$ and $\vec{\vx} \in \R^3$, we can instantiate $g$ as $g \cdot \vec{\vx} \coloneqq \mQ\vec{\vx} + \vec{\vt}$, where $\mQ \in \R^{3\times 3}$ is an orthogonal matrix and $\vec{\vt} \in \R^3$ is a translation vector. Implementing $g$ on a coordinate matrix $\vec{\mX} \in \R^{n \times 3}$ means transforming each coordinate (i.e. each row) with $g$.

Then we prove the E(3)-equivariance of each module in GET as follows:

\begin{lemma}
\label{lemma:att}
  Denote the bilevel attention module as $\{\mH_i', \vec{\mX}_i'\} = \mathrm{Att}(\{\mH_i, \vec{\mX}_i\})$, then it is E(3)-equivariant. Namely, $\forall g \in \text{E(3)}$, we have $\{\mH_i', g\cdot \vec{\mX}_i'\} = \mathrm{Att}(\{\mH_i, g\cdot \vec{\mX}_i\})$.
\end{lemma}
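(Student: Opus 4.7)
The plan is to decompose the claim into two sub-claims and verify each by tracing how the intermediate quantities in equations \eqref{eq:qkv}--\eqref{eq:updateX} transform under $g \in \mathrm{E}(3)$. Specifically, writing $g \cdot \vec{\vx} = \mQ \vec{\vx} + \vec{\vt}$ for some orthogonal $\mQ \in \R^{3\times 3}$ and $\vec{\vt} \in \R^3$, I will show that $\mH_i'$ is E(3)-invariant, while $\vec{\mX}_i'$ is E(3)-equivariant in the sense that it transforms as $\mQ\vec{\mX}_i'[p] + \vec{\vt}$. Since the two properties together are exactly what the lemma asserts, this decomposition suffices.

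First I would observe that the inputs $\mH_i, \mH_j$ are untouched by $g$, hence $\mQ_i, \mK_j, \mV_j$ from \eqref{eq:qkv} are invariant. Next, since $\vec{\mX}_{ij}[p,q] = \vec{\mX}_i[p] - \vec{\mX}_j[q]$ is a difference, the translation $\vec{\vt}$ cancels and it transforms as $\mQ\vec{\mX}_{ij}[p,q]$; by orthogonality of $\mQ$, the norms $\mD_{ij}[p,q]$ are invariant. Plugging into \eqref{eq:rij}--\eqref{eq:beta}, every term contributing to $\mR_{ij}, \bm{\alpha}_{ij}, r_{ij}, \beta_{ij}$ is invariant, and so are these quantities themselves since softmax and the MLPs $\sigma_D, \phi_e$ act only on invariant scalars. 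This directly yields the invariance of $\mH_i'$ in \eqref{eq:updateH}, because all coefficients as well as $\mV_j$ are invariant.

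For the coordinate update in \eqref{eq:updateX}, I would write the increment as $\Delta\vec{\mX}_i[p] \coloneqq \sum_{j \in \gN(i)} \beta_{ij}(\bm{\alpha}_{ij}[p] \odot \phi_X(\mR_{ij}[p])) \cdot \vec{\mX}_{ij}[p]$. The weights $\beta_{ij}(\bm{\alpha}_{ij}[p] \odot \phi_X(\mR_{ij}[p]))$ form a row vector of invariant scalars by the preceding step, while $\vec{\mX}_{ij}[p]$ (an $n_j \times 3$ matrix) transforms as $\mQ \vec{\mX}_{ij}[p]$. Linearity of matrix multiplication then gives $\Delta\vec{\mX}_i[p] \mapsto \mQ \cdot \Delta\vec{\mX}_i[p]$, with no translation component. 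Combining with $\vec{\mX}_i[p] \mapsto \mQ\vec{\mX}_i[p] + \vec{\vt}$ produces $\mQ\bigl(\vec{\mX}_i[p] + \Delta\vec{\mX}_i[p]\bigr) + \vec{\vt} = g \cdot \vec{\mX}_i'[p]$, as required.

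The one step that needs care is the coordinate update, since a naive aggregation of absolute coordinates would pick up spurious $\vec{\vt}$ contributions and break equivariance; the argument hinges on the fact that only the relative coordinates $\vec{\mX}_{ij}$ appear linearly in \eqref{eq:updateX}, so translations cancel and only the orthogonal part $\mQ$ survives, combining cleanly with the equivariant transformation of the skip connection $\vec{\mX}_i[p]$. The rest of the proof is a routine propagation of invariance through softmax and MLPs acting on E(3)-invariant scalars.
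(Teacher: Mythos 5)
Your proposal is correct and follows essentially the same route as the paper's proof: establish E(3)-invariance of $\mQ_i,\mK_j,\mV_j,\mD_{ij}$ and hence of $\mR_{ij},\bm{\alpha}_{ij},r_{ij},\beta_{ij}$, which gives invariance of the $\mH_i'$ update, and then use the fact that only relative coordinates $\vec{\mX}_{ij}$ enter the coordinate update linearly so that translations cancel and only $\mQ$ survives, combining with the skip connection to give equivariance. The only cosmetic difference is that you factor the coordinate increment into a single term $\Delta\vec{\mX}_i[p]$ rather than expanding the stacked row-wise computation as the paper does.
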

\begin{proof}
    The key to the proof of Lemma~\ref{lemma:att} is to prove that the propagation in Eq.~\ref{eq:qkv}-\ref{eq:updateX} is E(3)-invariant on $\mH_i$ and E(3)-equivariant on $\vec{\mX_i}$. Obviously, the correlation $\mR_{ij}$ between block $i$ and block $j$ in Eq.~\ref{eq:rij} is E(3)-invariant because all the inputs, that is, the query, the key, and the distance matrices, are not affected by the geometric transformation $g$. Therefore, we can immediately derive that the atom-level cross attention $\bm{\alpha}_{ij}$ in Eq.~\ref{eq:alpha} is E(3)-invariant. Similarly, the block-level attention $\beta_{ij}$ in Eq.~\ref{eq:beta} is E(3)-invariant because it only operates on $\vr_{ij}$ in Eq.~\ref{eq:brij} which aggregates $\bm{\alpha}_{ij}$ and the edge feature. Finally, we can derive the E(3)-invariance on $\mH$ and the E(3)-equivariance on $\vec{\mX}$:
    \begin{scriptsize}
    \begin{align*}
        \mH_i'[p] &= \mH_i[p] + \sum_{j \in \gN(i)} \beta_{ij}  \phi_m(\vm_{ij, p}), \\
        &= \mH_i[p] + \sum_{j \in \gN(i)} \beta_{ij} \phi_m(\bm{\alpha}_{ij}[p]\cdot \phi_v(\mV_j\parallel \mathrm{RBF}(\mD_{ij}[p]))), \\
        g\cdot \vec{\mX}_i'[p] &= g\cdot \left(\vec{\mX}_i[p] + \sum_{j \in \gN(i)} \beta_{ij} \left(\vec{\vm}_{ij, p} \odot \sigma_m(\vm_{ij, p})\right)\right) \\
        &= g\cdot \left(\vec{\mX}_i[p] + \sum_{j \in \gN(i)} \beta_{ij} \left(\bm{\alpha}_{ij}[p]\cdot (\vec{\mX}_{ij}[p]\odot\sigma_v(\mV_j \parallel \mathrm{RBF}(\mD_{ij}[p]))) \odot \sigma_m(\vm_{ij, p}) \right) \right) \\
        &= \mQ \left(\vec{\mX}_i[p] + \sum_{j \in \gN(i)} \beta_{ij} \left(\bm{\alpha}_{ij}[p]\cdot (\vec{\mX}_{ij}[p]\odot\sigma_v(\mV_j \parallel \mathrm{RBF}(\mD_{ij}[p]))) \odot \sigma_m(\vm_{ij, p}) \right) \right) + \vec{\vt}\\
        &= (\mQ \vec{\mX}_i[p] + \vec{\vt}) + \sum_{j \in \gN(i)} \beta_{ij} \\
        &\quad\left(\bm{\alpha}_{ij}[p]\cdot \left(
        \begin{bmatrix}
            \mQ (\vec{\mX}_i[p] - \vec{\mX}_j[1]) \\
            \vdots \\
            \mQ (\vec{\mX}_i[p] - \vec{\mX}_j[n_j]) \\
        \end{bmatrix} \odot\sigma_v(\mV_j \parallel \mathrm{RBF}(\mD_{ij}[p]) )\right)\odot \sigma_m(\vm_{ij, p})\right) \\
        &= (\mQ \vec{\mX}_i[p] + \vec{\vt}) + \sum_{j \in \gN(i)} \beta_{ij}\\ &\quad\left(\bm{\alpha}_{ij}[p]\cdot \left(
        \begin{bmatrix}
            \mQ \vec{\mX}_i[p] + \vec{\vt} - (\mQ \vec{\mX}_j[1] + \vec{\vt}) \\
            \vdots \\
            \mQ \vec{\mX}_i[p] + \vec{\vt} - (\mQ \vec{\mX}_j[n_j] + \vec{\vt}) \\
        \end{bmatrix} \odot\sigma_v(\mV_j \parallel \mathrm{RBF}(\mD_{ij}[p]) )\right)\odot \sigma_m(\vm_{ij, p})\right) \\
        &= g\cdot \vec{\mX}_i[p] + \sum_{j \in \gN(i)} \beta_{ij}\\ &\quad\left(\bm{\alpha}_{ij}[p]\cdot \left(
        \begin{bmatrix}
            g\cdot \vec{\mX}_i[p] - g\cdot \vec{\mX}_j[1] \\
            \vdots \\
            g\cdot \vec{\mX}_i[p] - g\cdot \vec{\mX}_j[n_j] \\
        \end{bmatrix} \odot\sigma_v(\mV_j \parallel \mathrm{RBF}(\mD_{ij}[p]) )\right)\odot \sigma_m(\vm_{ij, p})\right),
    \end{align*}
    \end{scriptsize}
    which concludes the proof of Lemma~\ref{lemma:att}.
\end{proof}

\begin{lemma}
    \label{lemma:ffn}
    Denote the equivariant feed-forward network as as $\{\mH_i', \vec{\mX}_i'\} = \mathrm{EFFN}(\{\mH_i, \vec{\mX}_i\})$, then it is E(3)-equivariant. Namely, $\forall g \in \text{E(3)}$, we have $\{\mH_i', g\cdot \vec{\mX}_i'\} = \mathrm{EFFN}(\{\mH_i, g\cdot \vec{\mX}_i\})$.
\end{lemma}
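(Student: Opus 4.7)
The plan is to verify E(3)-equivariance module by module along the computational path of the EFFN, exploiting the fact that the hidden part $\vh$ carries only invariant features while the geometric part $\vec{\vx}$ carries covariant features. Under the action $g\cdot\vec{\vx} = \mQ\vec{\vx} + \vec{\vt}$ with $\mQ\in O(3)$, I want to show that every intermediate scalar or hidden-vector quantity is E(3)-invariant and every intermediate coordinate-like quantity is E(3)-equivariant, so that the two final updates in Eq.~\ref{eq:updateh} and Eq.~\ref{eq:updatex} inherit the correct transformation rules.

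First I would treat the centroids in Eq.~\ref{eq:centroid}. Since $\mH_i$ has no geometric component, $\vh_c$ is trivially invariant. For $\vec{\vx}_c$, because the centroid is an affine average $\frac{1}{n_i}\sum_p \vec{\mX}_i[p]$, the transformation commutes with the averaging: applying $g$ to every row of $\vec{\mX}_i$ yields $\mQ\vec{\vx}_c + \vec{\vt} = g\cdot\vec{\vx}_c$. The crucial consequence is that in Eq.~\ref{eq:dx} the translation cancels,
\begin{align*}
(g\cdot\vec{\vx}) - (g\cdot\vec{\vx}_c) = \mQ\vec{\vx} + \vec{\vt} - \mQ\vec{\vx}_c - \vec{\vt} = \mQ\,\Delta\vec{\vx},
\end{align*}
so $\Delta\vec{\vx}$ is translation-invariant and rotation-covariant.

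Next I would use orthogonality of $\mQ$ to show that $\vr$ is fully E(3)-invariant: the quadratic term transforms as $(\mQ\Delta\vec{\vx})^\top(\mQ\Delta\vec{\vx}) = \Delta\vec{\vx}^\top\mQ^\top\mQ\,\Delta\vec{\vx} = \Delta\vec{\vx}^\top\Delta\vec{\vx}$, and this propagates through the norm in the denominator and the elementwise MLP $\phi_r$. Consequently the arguments $(\vh,\vh_c,\vr)$ of the MLPs $\phi_h$ and $\phi_x$ are all E(3)-invariant, so $\phi_h(\vh,\vh_c,\vr)$ is an invariant hidden vector and $\phi_x(\vh,\vh_c,\vr)$ is an invariant scalar.

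Combining these, the hidden update $\vh' = \vh + \phi_h(\vh,\vh_c,\vr)$ is plainly invariant, while for the coordinate update I would compute
\begin{align*}
(g\cdot\vec{\vx}) + (\mQ\,\Delta\vec{\vx})\,\phi_x(\vh,\vh_c,\vr) = \mQ\bigl(\vec{\vx} + \Delta\vec{\vx}\,\phi_x(\vh,\vh_c,\vr)\bigr) + \vec{\vt} = g\cdot\vec{\vx}',
\end{align*}
which is precisely the equivariance claim of Lemma~\ref{lemma:ffn}. There is no real obstacle here; the only subtle point worth highlighting is why subtracting the block centroid before forming the quadratic feature is necessary: it is what turns translation-covariance into translation-invariance, and without it $\vr$ (and hence $\phi_x$) would pick up a spurious dependence on $\vec{\vt}$ that would break the final equivariance.
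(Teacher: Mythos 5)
Your proposal is correct and follows essentially the same route as the paper's proof: establish equivariance of the block centroid, observe that the translation cancels so $\Delta\vec{\vx}$ transforms by $\mQ$ alone, use $\mQ^\top\mQ = \mI$ to conclude invariance of $\vr$ and hence of the MLP outputs, and then read off the invariance of $\vh'$ and the equivariance of $\vec{\vx}'$ from Eq.~\ref{eq:updateh}--\ref{eq:updatex}. Your closing remark on why subtracting the centroid is what converts translation-covariance into invariance is a nice addition not made explicit in the paper, but the argument itself is identical.
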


\begin{proof}
    The proof of Lemma~\ref{lemma:ffn} focuses on the single-atom updates in Eq.~\ref{eq:centroid}-\ref{eq:updatex}. First, it is easy to obtain the E(3)-equivariance of the centroid in Eq.~\ref{eq:centroid}:
    \begin{align*}
        g\cdot \vec{\vx}_c = g\cdot \mathrm{centroid}(\vec{\mX}_i) = \mathrm{centroid}(g\cdot \vec{\mX}_i).
    \end{align*}
    Then we have the following equation on the relative coordinate $\Delta\vec{\vx}$ in Eq.~\ref{eq:dx}:
    \begin{align*}
        \mQ\Delta\vec{\vx} = (\mQ \vec{\vx} + \vec{\vt}) - (\mQ\vec{\vx}_c + \vec{\vt}) = g\cdot \vec{\vx} - g\cdot \vec{\vx}_c.
    \end{align*}
    We can immediately obtain the E(3)-invariance of $\vr$ in Eq.~\ref{eq:dx}:
    \begin{align*}
        \vr = \mathrm{RBF}(\|\mQ\Delta\vec{\vx}\|_2)
        = \mathrm{RBF}(\sqrt{(\mQ\Delta\vec{\vx})^\top(\mQ\Delta\vec{\vx})})
        = \mathrm{RBF}(\sqrt{\Delta\vec{\vx}^\top\mQ^\top\mQ\Delta\vec{\vx}})
        = \mathrm{RBF}(\|\Delta\vec{\vx}\|_2).
    \end{align*}
    Finally we can derive the E(3)-invariance on $\vh$ and the E(3)-equivariance on $\vec{\vx}$:
    \begin{align*}
      \vh' &= \vh + \phi_h(\vh, \vh_c, \vr), \\
      g\cdot \vec{\vx}' &= g\cdot (\vec{\vx} + \Delta\vec{\vx}\phi_x(\vh, \vh_c, \vr)) \\
      &= \mQ(\vec{\vx} + \Delta\vec{\vx}\phi_x(\vh, \vh_c, \vr)) + \vec{\vt} \\
      &= \mQ\vec{\vx} + \vec{\vt} + \mQ\Delta\vec{\vx}\phi_x(\vh, \vh_c, \vr) \\
      &= g\cdot \vec{\vx} + (g\cdot \vec{\vx} - g\cdot \vec{\vx}_c)\phi_x(\vh, \vh_c, \vr) \\
      &= g\cdot \vec{\vx} + (g\cdot \vec{\vx} - \mathrm{centroid}(g\cdot \vec{\mX}_i))\phi_x(\vh, \vh_c, \vr),
    \end{align*}
    which concludes the proof of Lemma~\ref{lemma:ffn}
\end{proof}

\begin{lemma}
    \label{lemma:ln}
    Denote the equivariant layer normalization as as $\{\mH_i', \vec{\mX}_i'\} = \mathrm{ELN}(\{\mH_i, \vec{\mX}_i\})$, then it is E(3)-equivariant. Namely, $\forall g \in \text{E(3)}$, we have $\{\mH_i', g\cdot \vec{\mX}_i'\} = \mathrm{ELN}(\{\mH_i, g\cdot \vec{\mX}_i\})$.
\end{lemma}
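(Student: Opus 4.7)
The plan is to decompose the lemma into two independent claims: E(3)-invariance of the hidden-state update in Eq.~\ref{eq:normh}, and E(3)-equivariance of the coordinate update in Eq.~\ref{eq:normx}. Because Eq.~\ref{eq:normh} only involves $\vh$ and trainable parameters $\bm{\gamma}, \bm{\beta}$ with no coordinate appearing, its invariance under any $g \in \text{E(3)}$ is immediate and the proof dispatches it in one line.

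For the coordinate part, writing the action as $g\cdot \vec{\vx} = \mQ\vec{\vx} + \vec{\vt}$ with $\mQ^\top\mQ = I$, I would first establish three sub-facts. (i) The global centroid is equivariant, $\E[g\cdot \vec{\tX}] = \mQ\,\E[\vec{\tX}] + \vec{\vt}$, by linearity of the empirical mean, in the same spirit as the centroid argument already used in the proof of Lemma~\ref{lemma:ffn}. (ii) The recentered coordinate picks up only the orthogonal part, $g\cdot \vec{\vx} - \E[g\cdot \vec{\tX}] = \mQ(\vec{\vx} - \E[\vec{\tX}])$, so the translation cancels inside the numerator of Eq.~\ref{eq:normx}. (iii) The denominator $\mathrm{Var}[\vec{\tX} - \E[\vec{\tX}]]$ is E(3)-invariant.

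With these in hand, I would substitute the transformed quantities into Eq.~\ref{eq:normx}, factor $\mQ$ out of the scaled centered term, and absorb $\vec{\vt}$ through the recovered centroid on the right, obtaining $g\cdot \vec{\vx}' = \mQ\vec{\vx}' + \vec{\vt}$, which is exactly the claimed equivariance. Combining with the invariance of the hidden branch concludes the lemma.

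The main obstacle is sub-fact (iii). Its validity hinges on interpreting $\mathrm{Var}[\vec{\tX} - \E[\vec{\tX}]]$ as a scalar trace-style quantity such as $\frac{1}{N}\sum_i \|\vec{\vx}_i - \E[\vec{\tX}]\|^2$; orthogonality of $\mQ$ then yields invariance atom by atom via $\|\mQ(\vec{\vx}_i - \E[\vec{\tX}])\|^2 = (\vec{\vx}_i - \E[\vec{\tX}])^\top \mQ^\top \mQ (\vec{\vx}_i - \E[\vec{\tX}]) = \|\vec{\vx}_i - \E[\vec{\tX}]\|^2$, and summing over atoms preserves the value. A per-coordinate variance would not enjoy this invariance because rotations mix axes and would leave an anisotropic denominator; I would therefore make the trace-style convention explicit at the start of the proof to keep the argument airtight.
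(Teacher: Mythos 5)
Your proposal is correct and follows essentially the same route as the paper's proof: both reduce the lemma to the trivial invariance of the hidden branch plus equivariance of the recentered, rescaled coordinate, hinging on centroid equivariance and the E(3)-invariance of $\mathrm{Var}[\vec{\tX} - \E[\vec{\tX}]]$ interpreted as the pooled scalar $\frac{1}{3N}\sum_i \|\vec{\vx}_i - \E[\vec{\tX}]\|^2$, which is exactly how the paper expands it before applying $\mQ^\top\mQ = \mI$. Your explicit remark that a per-axis variance would break rotational invariance is a useful clarification of a convention the paper leaves implicit, but it does not change the argument.
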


\begin{proof}
    Since the layer normalization is implemented on the atom level, namely each row of the coordinate matrix in a node, we again only need to concentrate on the single-atom normalization in Eq.~\ref{eq:normh}-\ref{eq:normx}. The key points lie in the E(3)-equivariance of $\E[\vec{\tX}]$ and the E(3)-invariance of $\mathrm{Var}[\vec{\tX} - \E[\vec{\tX}]]$. The first one is obvious because $\E[\vec{\tX}]$ is the centroid of the coordinates of all atoms:
    \begin{align*}
        g\cdot \E[\vec{\tX}] = g\cdot \mathrm{centroid}(\vec{\tX}) = \mathrm{centroid}(g\cdot \vec{\tX}) = \E[g\cdot \vec{\tX}].
    \end{align*}
    Suppose there are $N$ atoms in total, then we can prove the E(3)-invariance of the variance as follows:
    \begin{align*}
        \mathrm{Var}[\vec{\tX} - \E[\vec{\tX}]] &= \frac{\sum_{i=1}^N(x_i - \bar{x})^2 + \sum_{i=1}^N(y_i - \bar{y})^2 + \sum_{i=1}^N(z_i - \bar{z})^2}{3N} \\
        &= \frac{\sum_{i=1}^N [(x_i - \bar{x})^2 + (y_i - \bar{y})^2 + (z_i - \bar{z})^2]}{3N} \\
        &= \frac{\sum_{i=1}^N (\vec{\vx}_i - \E[\vec{\tX}])^\top(\vec{\vx}_i - \E[\vec{\tX}])}{3N} \\
        &= \frac{\sum_{i=1}^N (\vec{\vx}_i - \E[\vec{\tX}])^\top\mQ^\top\mQ(\vec{\vx}_i - \E[\vec{\tX}])}{3N} \\
        &= \frac{\sum_{i=1}^N (\mQ\vec{\vx}_i - \mQ\E[\vec{\tX}])^\top(\mQ\vec{\vx}_i - \mQ\E[\vec{\tX}])}{3N} \\
        &= \frac{\sum_{i=1}^N (g\cdot \vec{\vx}_i - g\cdot \E[\vec{\tX}])^\top(g\cdot \vec{\vx}_i - g\cdot \E[\vec{\tX}])}{3N} \\
        &= \frac{\sum_{i=1}^N (g\cdot \vec{\vx}_i - \E[g\cdot \vec{\tX}])^\top(g\cdot \vec{\vx}_i - \E[g\cdot \vec{\tX}])}{3N} \\
        &= \mathrm{Var}[g\cdot \vec{\tX} - \E[g\cdot \vec{\tX}]].
    \end{align*}
    Therefore, we can finally derive the E(3)-invariance on $\vh$ and the E(3)-equivariance on $\vec{\vx}$ in Eq.~\ref{eq:normh}-\ref{eq:normx}:
    \begin{align*}
      \vh &= \frac{\vh - \mathbb{E}[\vh]}{\hphantom{11}\sqrt{\mathrm{Var}[\vh]}\hphantom{11}} \cdot \bm{\gamma} + \bm{\beta}, \\
      g\cdot \vec{\vx} &= g\cdot (\frac{\vec{\vx} - \E[\vec{\tX}]}{\sqrt{\mathrm{Var}[\vec{\tX} - \E[\vec{\tX}]]}} \cdot \sigma + \E[\vec{\tX}])
      = \frac{\mQ \vec{\vx} - \mQ \E[\vec{\tX}]}{\sqrt{\mathrm{Var}[\vec{\tX} - \E[\vec{\tX}]]}} \cdot \sigma + \mQ \E[\vec{\tX}] + \vec{\vt} \\
      &= \frac{\mQ \vec{\vx} + \vec{\vt} - (\mQ \E[\vec{\tX}] + \vec{\vt})}{\sqrt{\mathrm{Var}[\vec{\tX} - \E[\vec{\tX}]]}} \cdot \sigma + (\mQ \E[\vec{\tX}] + \vec{\vt})
      = \frac{g\cdot \vec{\vx} - g\cdot \E[\vec{\tX}]}{\sqrt{\mathrm{Var}[g\cdot \vec{\tX} - \E[g\cdot \vec{\tX}]]}} \cdot \sigma + g\cdot \E[\vec{\tX}] \\
      &= \frac{g\cdot \vec{\vx} - \E[g\cdot \vec{\tX}]}{\sqrt{\mathrm{Var}[g\cdot \vec{\tX} - \E[g\cdot \vec{\tX}]]}} \cdot \sigma + \E[g\cdot \vec{\tX}],
    \end{align*}
    which concludes the proof of Lemma~\ref{lemma:ln}.
\end{proof}

With Lemma~\ref{lemma:att}-\ref{lemma:ln} at hand, it is obvious to deduce the E(3)-equivariance of the GET layer.

\subsection{Proof of Intra-Block Permuation Invariance}
Obviously, the feed-forward network and the layer normalization are invariant to intra-block permutations because they are implemented on single atoms and the only incorporated multi-atom operation is averaging, which is invariant to the permutations. Therefore, the proof narrows down to the intra-block permutation invariance of the bilevel attention module.
\begin{lemma}
\label{lemma:attper}
  Denote the bilevel attention module as $\{\mH_i', \vec{\mX}_i'\} = \mathrm{Att}(\{\mH_i, \vec{\mX}_i\})$, then it conforms to intra-block permutation invariance. Namely, $\forall \{\pi_i \in S_{n_i} | 1 \leq i \leq B\}$, where $B$ is the number of blocks in the input and $S_{n_i}$ includes all permutations on $n_i$ elements, we have $\{\pi_i \cdot \mH_i', \pi_i \cdot \vec{\mX}_i'\} = \mathrm{Att}(\{\pi_i \cdot \mH_i, \pi_i\cdot \vec{\mX}_i\})$.
\end{lemma}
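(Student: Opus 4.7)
The plan is to trace how an intra-block permutation $\pi_i$ propagates through each line of Eqs.~\ref{eq:qkv}--\ref{eq:updateX}. First, since $\mW_Q, \mW_K, \mW_V$ are applied from the right in Eq.~\ref{eq:qkv}, replacing $\mH_j$ by $\pi_j \cdot \mH_j$ simply reorders the rows of $\mQ_j, \mK_j, \mV_j$ by $\pi_j$, and analogously for block $i$. Since $\vec{\mX}_{ij}[p,q] = \vec{\mX}_i[p] - \vec{\mX}_j[q]$ is defined entry-wise, permuting $\vec{\mX}_i$ by $\pi_i$ and $\vec{\mX}_j$ by $\pi_j$ permutes the first and second axes of $\vec{\mX}_{ij}$ (and of $\mD_{ij}$) accordingly.

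Next, I would check that the correlation $\mR_{ij}$ in Eq.~\ref{eq:rij} inherits this double-permutation structure: $(\mQ_i \mK_j^\top)[p,q]$ depends only on the $p$-th row of $\mQ_i$ and the $q$-th row of $\mK_j$, and $\sigma_D$ acts entry-wise, so the rows and columns of $\mR_{ij}$ are permuted by $\pi_i$ and $\pi_j$, respectively. Because the softmax in Eq.~\ref{eq:alpha} is row-wise, $\bm{\alpha}_{ij}$ inherits exactly the same structure. Then I would observe that $r_{ij}$ in Eq.~\ref{eq:brij} is \emph{invariant} to both $\pi_i$ and $\pi_j$, since $\vone^\top \mR_{ij} \vone / (n_i n_j)$ is just the mean of all entries of $\mR_{ij}$ and $\phi_e(\ve_{ij})$ involves no atom indices at all; hence $\beta_{ij}$ in Eq.~\ref{eq:beta} is also invariant.

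Finally, I would verify the two update rules. In Eq.~\ref{eq:updateH} the inner contraction $\bm{\alpha}_{ij}[p] \cdot \mV_j = \sum_{q} \bm{\alpha}_{ij}[p,q]\, \mV_j[q]$ is invariant to $\pi_j$ because the same permutation acts on both factors of the sum (a relabeling of the summation index $q$ cancels it), so the only surviving dependence is through the free block-$i$ index $p$; combined with the residual term $\mH_i[p]$, which is permuted by $\pi_i$ by assumption, the updated matrix is exactly $\pi_i \cdot \mH_i'$. The coordinate update in Eq.~\ref{eq:updateX} is handled identically, since $\bm{\alpha}_{ij}[p] \odot \phi_X(\mR_{ij}[p])$ and the rows of $\vec{\mX}_{ij}[p]$ share the same $\pi_j$-index, which cancels inside the contraction, while the remaining $p$-index and the residual $\vec{\mX}_i[p]$ transport the $\pi_i$ action to the output. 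The only (mild) obstacle is bookkeeping: $\bm{\alpha}_{ij}$ and $\mR_{ij}$ carry two distinct permutation symmetries along their two axes, so one must keep careful track of which axis is being summed over and which remains free, and confirm that the row-wise softmax does not mix them.
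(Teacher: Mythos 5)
Your proposal is correct and follows essentially the same route as the paper's proof: you track how $\pi_i$ and $\pi_j$ act on the rows and columns of $\mQ_i,\mK_j,\mV_j,\mD_{ij},\mR_{ij},\bm{\alpha}_{ij}$, note that the all-entries average makes $r_{ij}$ and hence $\beta_{ij}$ invariant, and cancel the $\pi_j$ action inside the contractions over the block-$j$ index. The paper phrases this with permutation matrices and the identity $\mP_j^\top\mP_j=\mI$ where you relabel the summation index, but these are the same argument.
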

\begin{proof}
    Denote the the permutation of block $i$ as $\pi_i$, then it can be instantiated as the multiplication of a series of elementary row-switching matrices $\mP_i = \mP_i^{(m_i)}\mP_i^{(m_i-1)}\hdots\mP_i^{(1)}$. For example, we have $\pi_i\cdot \mH_i = \mP_i\mH_i$, $\pi_i\cdot \vec{\mX}_i = \mP_i \vec{\mX}_i$. Here we first prove an elegant property of $\mP_i$, which we will use in the later proof:
    \begin{align*}
        \mP_i^\top \mP_i
        &= (\mP_i^{(m_i)}\mP_i^{(m_i-1)}\hdots\mP_i^{(1)})^\top(\mP_i^{(m_i)}\mP_i^{(m_i-1)}\hdots\mP_i^{(1)}) \\
        &= {\mP_i^{(1)}}^\top\hdots{\mP_i^{(m_i-1)}}^\top{\mP_i^{(m_i)}}^\top\mP_i^{(m_i)}\mP_i^{(m_i-1)}\hdots\mP_i^{(1)} \\
        &= {\mP_i^{(1)}}^\top\hdots{\mP_i^{(m_i-1)}}^\top\mI\mP_i^{(m_i-1)}\hdots\mP_i^{(1)} \\
        &= \hdots \\
        &= \mI
    \end{align*}
    Given arbitrary permutations on each block, we have the permutated query, key, and value matrics:
    \begin{align*}
      \mP_i\mQ_i &= \mP_i\mH_i\mW_Q = (\pi_i\cdot \mH_i)\mW_Q, \\
      \mP_i\mK_i &= \mP_i\mH_i\mW_K = (\pi_i\cdot \mH_i)\mW_K, \\
      \mP_i\mV_i &= \mP_i\mH_i\mW_V = (\pi_i\cdot \mH_i)\mW_V.
    \end{align*}
    The distance matrix $\mD_{ij}$ is also permutated as $\mP_i\mD_{ij}\mP_j^\top$. Therefore, the atom-level attention $\bm{\alpha}_{ij}$ in Eq.~\ref{eq:alpha} is also permutated as $\mP_i\bm{\alpha}_{ij}\mP_j^\top$, and the messages in Eq.~\ref{eq:inv_msg}-\ref{eq:equi_msg} are permutated as: 
    \begin{align*}
        \mP_i\vm_{ij} &= \mP_i
        \begin{bmatrix}
            \bm{\alpha}_{ij}[1] \cdot \phi_v(\mV_j \| \mathrm{RBF}(\mD_{ij}[1])) \\
            \vdots \\
            \bm{\alpha}_{ij}[n_i] \cdot \phi_v(\mV_j \| \mathrm{RBF}(\mD_{ij}[n_i])) \\
        \end{bmatrix}
        = \mP_i
        \begin{bmatrix}
            \bm{\alpha}_{ij}[1] \mP_j^\top \mP_j \phi_v(\mV_j \| \mathrm{RBF}(\mD_{ij}[1])) \\
            \vdots \\
            \bm{\alpha}_{ij}[n_i] \mP_j^\top \mP_j \phi_v(\mV_j \| \mathrm{RBF}(\mD_{ij}[n_i])) \\
        \end{bmatrix} \\
        \mP_i \vec{\vm}_{ij} &= \mP_i
        \begin{bmatrix}
            \bm{\alpha}_{ij}[1] \cdot \left(\vec{\mX}_{ij}[p]\odot \sigma_v(\mV_j \| \mathrm{RBF}(\mD_{ij}[1])\right) \\
            \vdots \\
            \bm{\alpha}_{ij}[n_i] \cdot \left(\vec{\mX}_{ij}[p]\odot \sigma_v(\mV_j \| \mathrm{RBF}(\mD_{ij}[n_i])\right) \\
        \end{bmatrix} \\
        &= \mP_i
        \begin{bmatrix}
            \bm{\alpha}_{ij}[1] \mP_j^\top \mP_j \left(\vec{\mX}_{ij}[p]\odot \sigma_v(\mV_j \| \mathrm{RBF}(\mD_{ij}[1])\right) \\
            \vdots \\
            \bm{\alpha}_{ij}[n_i] \mP_j^\top \mP_j \left(\vec{\mX}_{ij}[p]\odot \sigma_v(\mV_j \| \mathrm{RBF}(\mD_{ij}[n_i])\right) \\
        \end{bmatrix},
    \end{align*}
    
    The block-level attention $\beta_{ij}$ in Eq.~\ref{eq:beta} remains unchanged as the average of $\mR_{ij}$ in obtaining $\vr_{ij}$ eliminates the effect of permutations. Finally, we can derive the intra-block permutation invariance as follows:
    \begin{align*}
        \mP_i \mH_i' &= \mP_i \left(\mH_i + \sum_{j\in\gN(i)} \beta_{ij} \phi_m(\vm_{ij}) \right)
        = \mP_i\mH_i + \sum_{j\in\gN(i)}\beta_{ij}\phi_m(\mP_i\vm_{ij}), \\
        \mP_i \vec{\mX}_{i}' &= \mP_i \left(\vec{\mX}_i + \sum_{j\in\gN(i)} \beta_{ij} \vec{\vm}_{ij} \odot \sigma_m(\vec{\vm}_{ij}) \right) = \mP_i \vec{\mX}_i + \sum_{j\in\gN(i)} \beta_{ij} (\mP_i\vec{\vm}_{ij}) \odot \sigma_m(\mP_i\vec{\vm}_{ij})
    \end{align*}
    which concludes Lemma~\ref{lemma:attper}.
\end{proof}

\section{Complexity Analysis}
\label{app:complexity}
To discuss the scalability of the model, we additionally provide the complexity analysis as follows. The main complexity lies in the attention-based message passing module. Suppose block $i$ and block $j$ have $n_i$ and $n_j$ atoms, respectively. Since the attention module implements bipartite cross attention between block pairs, there are a total of $n_i n_j$ attention edges between block $i$ and block $j$. Therefore, the exact complexity should be $O(\sum_{i \in \mathcal{V}} \sum_{j \in \mathcal{N}(i)} n_i n_j)$, where $\mathcal{V}$ includes all nodes and $\mathcal{N}(i)$ includes all neighbors of block $i$. Since we use $K$ nearest neighbors to construct graphs in block level, we have $\mathcal{N}(i) \leq K$. Denote the maximum number of atoms in a single block is $C$(in natural proteins we have $C=14$), we have $n_i \leq C$. Therefore, we have $\sum_{i \in \mathcal{V}} \sum_{j \in \mathcal{N}(i)} n_i n_j \leq \sum_{i\in \mathcal{V}}KC^2=KC^2|\mathcal{V}|$, namely, the complexity should be bounded by $O(KC^2|\mathcal{V}|)$, which is linear to the number of blocks in the graph. A linear complexity means the algorithm should be easy to scale to larger molecular systems.

Practically, the complexity can be further optimized by selecting only $k$ nearest neighbors of each atom in message passing between block $i$ and block $j$. With this sparse attention, the complexity is $O(\sum_{i \in \mathcal{V}} \sum_{j \in \mathcal{N}(i)} kn_i) \leq O(kKN)$, where $N$ is the total number of atoms.

\section{Ligand Efficacy Prediction}
\label{app:lep}
We additionally provide the evaluation results on Ligand Efficacy Prediction (LEP). This task requires identifying a given ligand as the "activator" or the "inactivator" of a functional protein. Specifically, given the two complexes where the ligand interacts with the active and the inactive conformation of the protein respectively, the models need to distinguish which one is more favorable. To this end, we first obtain the graph-level representations of the two complexes. Then we concatenate the two representations to do a binary classification. We use two metrics for evaluation: the area under the receiver operating characteristic (\textbf{AUROC}) and the area under precision-recall curve (\textbf{AUPRC}).

\paragraph{Dataset} We follow the LEP dataset and its splits in the Atom3D benchmark~\citep{townshend2020atom3d}, which includes 27 functional proteins and 527 ligands known as activator or inactivator to a certain protein. The active and the inactive complexes are generated by Glide~\citep{friesner2004glide}. The splits of the training, the validation, and the test sets are based on the functional proteins to ensure generalizability.

\begin{table}[htbp]
\vskip -0.2in
\centering
\caption{The mean and the standard deviation of three runs on ligand efficacy prediction. The best results are marked in bold and the second best are underlined.}
\label{tab:lep}
\scalebox{0.77}{
\begin{tabular}{cccc}
\toprule
Repr.                    & Model                  & AUROC$\uparrow$           & AUPRC$\uparrow$ \\
\midrule
\multirow{4}{*}{Block}   & SchNet                 & $0.732\pm 0.022$          &                            $0.718\pm 0.031$          \\
                         & DimeNet++              & $0.669\pm 0.014$          & $0.609\pm 0.036$          \\
                         & EGNN                   & {\ul $0.746\pm 0.017$}    & $\mathbf{0.755\pm 0.031}$    \\
                         & ET                     & $0.744\pm 0.034$          & $0.721\pm 0.052$          \\
                         \midrule
\multirow{4}{*}{Atom}    & SchNet                 & $0.712\pm 0.026$          &                            $0.639\pm 0.033$          \\
                         & DimeNet++              & $0.589\pm 0.049$          & $0.503\pm 0.020$          \\
                         & EGNN                   & $0.711\pm 0.020$          & $0.643\pm 0.041$          \\
                         & ET                     & $0.677\pm 0.004$          & $0.636\pm 0.054$          \\
                         \midrule
\multirow{4}{*}{Hierarchical}& SchNet             & $0.736\pm0.020$           &                            $0.731\pm 0.048$          \\
                         & DimeNet++              & $0.579\pm 0.118$          & $0.517\pm 0.100$          \\
                         & EGNN                   & $0.724\pm 0.027$          & $0.720\pm 0.056$          \\
                         & ET                     & $0.717\pm 0.033$          & $0.724\pm 0.055$          \\
                         \midrule
\multirow{1}{*}{Unified} & GET (ours)             & $\mathbf{0.761\pm 0.012}$ & {\ul $0.751\pm 0.012$} \\
                         \bottomrule
\end{tabular}
}
\vskip -0.1in
\end{table} 

\paragraph{Results}
We present the mean and the standard deviation of the metrics across three runs in Table~\ref{tab:lep}. LEP requires distinguishing the active and inactive conformations of the receptor, thus it is essential to capture the block-level geometry of the protein in addition to the atom-level receptor-ligand interactions. The unified representation excels at learning the bilevel geometry, therefore, naturally, we observe obvious gains on the metrics of our method compared to the baselines.

\section{Implementation Details}
\label{app:impl}

We conduct experiments on 1 GeForce RTX 2080 Ti GPU with 12G memory except the zero-shot evaluation on RNA/DNA-ligand affinity which needs 2 GPUs. Each model is trained with Adam optimizer and exponential learning rate decay. To avoid unstable checkpoints from early training stages, we select the latest checkpoint from the saved top-$k$ checkpoints on the validation set for testing. Since the number of blocks varies a lot in different samples, we set a upperbound of the number of blocks to form a dynamic batch instead of using a static batch size. We use $k=9$ for constructing the k-nearest neighbor graph in \textsection~\ref{sec:repr} and set the size of the RBF kernel ($d_{\text{rbf}}$) to 32. We give the description of the hyperparameters in Table~\ref{tab:descparam} and their values for each task in Table~\ref{tab:hyperparam}.

\begin{table}[htbp]
\centering
\vskip -0.2in
\caption{Descriptions of the hyperparameters.}
\label{tab:descparam}
\begin{tabular}{cl}
\toprule
hyperparameter       & description                                             \\ \hline
$d_h$                & Hidden size                                             \\
$d_r$                & Radial size for the attention module                                             \\
lr                   & Learning rate                                           \\
final\_lr            & Final learning rate                                     \\
max\_epoch           & Maximum of epochs to train                              \\
save\_topk           & Number of top-$k$ checkpoints to save                   \\
n\_layers            & Number of layers                                        \\
max\_n\_vertex       & Upperbound of the number of nodes in a batch            \\ \hline
\end{tabular}
\vskip -0.2in
\end{table}

\begin{table}[htbp]
\centering
\caption{Hyperparameters for our GET on each task.}
\label{tab:hyperparam}
\scalebox{0.9}{
\begin{tabular}{llllllll}
\toprule
hyperparameter       & PPA      & LBA       & LEP       & hyperparameter       & PPA      & LBA       & LEP       \\ \hline
\rowcolor{black!5}\multicolumn{8}{c}{GET}                                                                         \\ \hline
$d_h$                & 128      & 64        & 128        & $d_r$                & 16       & 32        & 64        \\
lr                   & $10^{-4}$& $10^{-3}$ & $5\times 10^{-4}$ & final\_lr            & $10^{-4}$& $10^{-6}$ & $10^{-4}$ \\
max\_epoch           & 20       & 10        & 90        & save\_topk           & 3        & 3         & 7         \\
n\_layers            & 3        & 3         & 3         & max\_n\_vertex       & 1500     & 2000      & 1500      \\ \hline
\rowcolor{black!5}\multicolumn{8}{c}{GET-mix}                                                                     \\ \hline
$d_h$                & 128      & 128       & -         & $d_r$                & 16       & 16        & -         \\
lr                   & $5\times10^{-5}$& $5\times10^{-5}$& -  & final\_lr      & $5\times10^{-5}$& $10^{-6}$ & -         \\
max\_epoch           & 20       & 20        & -         & save\_topk           & 3        & 3         & -         \\
n\_layers            & 3        & 3         & -         & max\_n\_vertex       & 1500     & 1500      & -         \\ \hline
\end{tabular}
}
\vskip -0.2in
\end{table}

\subsection{PDBbind Benchmark}
We follow~\citet{somnath2021multi, wang2022learning} to conduct experiments on the well-established PDBbind~\citep{wang2004pdbbind, liu2015pdb} and use the split with sequence identity threshold of 30\% which should barely have data leakage problem. A total of 4709 complexes are first filtered by resolution and then splitted into 3507, 466, 490 for training, validation, and testing~\citep{somnath2021multi}. We directly borrow the results of the baselines from~\citet{wang2022learning}. For our model, we set $d_h=64, d_r=64, \text{lr}=10^{-3}, \text{final\_lr}=10^{-4}$, and $\text{max\_n\_vertex}=1500$.

\subsection{Protein-Protein Affinity}
Here we illustrate the setup for protein-protein affinity prediction with more details.

We adopt the Protein-Protein Affinity Benchmark Version 2~\citep{kastritis2011structure, vreven2015updates} as the test set, which contains 176 diversified protein-protein complexes with annotated affinity collected from existing literature. These complexes are further categorized into three difficulty levels (i.e. Rigid, Medium, Flexible) according to the conformation change of the proteins from the unbound to the bound state~\citep{kastritis2011structure}, among which the Flexible split is the most challenging as the proteins undergo large conformation change upon binding.

As for training, we first filter out 2,500 complexes with annotated binding affinity ($K_i$ or $K_d$) from PDBbind~\citep{wang2004pdbbind}. Then we use MMseqs2~\citep{steinegger2017mmseqs2} to cluster the sequences of these complexes together with the test set by dividing complexes with sequence identity above 30\% into the same cluster, where sequence identity is calculated based on the BLOSUM62 substitution matrix~\citep{henikoff1992amino}. The complexes that shares the same clusters with the test set are dropped to prevent data leakage, after which we finally obtained 2,195 valid complexes. We split these complexes into training set and validation set with a ratio of 9:1 with respect to the number of clusters. Following previous literature~\citep{ballester2010machine,jimenez2018k, ragoza2017protein}, we predict the negative log-transformed value ($pK$) instead of direct regression on the affinity.

\section{Baselines}
\label{app:baseline}
\subsection{Implementation}
In this section, we describe the implementation details of different baselines. All the baselines are designed for structural learning on graphs whose nodes are represented as one feature vector and one coordinate. Therefore, for \textbf{block-level} representation, we average the embeddings and the coordinates of the atoms in each block before feeding the graph to the baselines. For \textbf{atom-level} representation, each node is represented as the embedding and the coordinate of each atom. For \textbf{Hierarchical} methods, we first implement message passing on atom-level graphs, then average the embeddings and coordinates within each block before conducting block-level message passing~\citep{jin2022antibody}. For fair comparison, the number of layers in each model is set to 3, except MACE and Equiformer, which are quite unstable in training with more than 2 layers. We present other hyperparameters in Table~\ref{tab:baselineparam}.

For \textbf{SchNet}~\citep{schutt2017schnet} and \textbf{DimeNet++}~\citep{gasteiger2020directional, gasteiger2020fast}, we use the implementation in PyTorch Geometric~\citep{fey2019fast}. For \textbf{EGNN}~\citep{satorras2021egnn}, \textbf{ET}~\citep{tholke2022torchmd}, \textbf{MACE}~\citep{batatia2022mace}, and \textbf{LEFTNet}~\citep{du2023new}, we directly use the official open-source codes provided in their papers. For \textbf{GemNet}~\citep{gasteiger2021gemnet} and \textbf{Equiformer}~\citep{liao2022equiformer}, we use the implementation in open-source projects, the Open Catalyst Project~\citep{ocp_dataset} and equiformer-pytorch\footnote{\url{https://github.com/lucidrains/equiformer-pytorch/tree/main}}, respectively. For fair comparison with SchNet and ET, we project the edge feature into the same shape as the distance feature in these models, and add the edge feature to the distance feature. It is also worth mentioning that due to the high complexity of angular features (DimeNet++, GemNet) and irreducible representations (MACE, Equiformer), these models need 2 GeForce RTX 2080 Ti GPUs for training on atomic graphs. Even so, some of them still fail to run the experiments due to the limitation of the GPU memory (\emph{i.e.} GemNet and Equiformer).

\begin{table}[htbp]
\centering
\caption{Hyperparameters for each baseline on each task.}
\label{tab:baselineparam}
\scalebox{0.8}{
\begin{tabular}{llllllll}
\toprule
hyperparameter       & PPA      & LBA       & LEP       & hyperparameter       & PPA      & LBA       & LEP       \\ \hline
\rowcolor{black!5}\multicolumn{8}{c}{SchNet}                                                                      \\ \hline
$d_h$                & 128      & 64        & 64        & max\_n\_vertex       & 1500     & 1500      & 1500      \\
lr                   & $10^{-3}$& $5\times10^{-4}$ & $10^{-3}$ & final\_lr     & $10^{-4}$& $10^{-5}$ & $10^{-4}$ \\
max\_epoch           & 20       & 60        & 65        & save\_topk           & 3        & 5         & 5         \\ \hline
\rowcolor{black!5}\multicolumn{8}{c}{SchNet-mix}                                                                  \\ \hline
$d_h$                & 128      & 128       & -         & max\_n\_vertex       & 1500     & 1500      & -         \\
lr                   & $5\times10^{-5}$& $5\times10^{-5}$& -  & final\_lr      & $5\times10^{-5}$& $5\times10^{-5}$ & -     \\
max\_epoch           & 20       & 20        & -         & save\_topk           & 3        & 3         & -         \\ \hline
\rowcolor{black!5}\multicolumn{8}{c}{DimeNet++}                                                                   \\ \hline
$d_h$                & 128      & 64        & 64        & max\_n\_vertex       & 1500     & 1500      & 1500      \\
lr                   & $10^{-3}$& $5\times10^{-4}$ & $10^{-3}$ & final\_lr     & $10^{-4}$& $10^{-5}$ & $10^{-4}$ \\
max\_epoch           & 20       & 60        & 65        & save\_topk           & 3        & 5         & 5         \\ \hline
\rowcolor{black!5}\multicolumn{8}{c}{EGNN}                                                                        \\ \hline
$d_h$                & 128      & 64        & 64        & max\_n\_vertex       & 1500     & 1500      & 1500      \\
lr                   & $10^{-3}$& $5\times10^{-4}$ & $10^{-3}$ & final\_lr     & $10^{-4}$& $10^{-5}$ & $10^{-4}$ \\
max\_epoch           & 20       & 60        & 65        & save\_topk           & 3        & 5         & 5         \\ \hline
\rowcolor{black!5}\multicolumn{8}{c}{ET}                                                                          \\ \hline
$d_h$                & 128      & 64        & 64        & max\_n\_vertex       & 1500     & 1500       & 1500      \\
lr                   & $10^{-3}$& $5\times10^{-4}$ & $10^{-3}$ & final\_lr     & $10^{-4}$& $10^{-5}$ & $10^{-4}$ \\
max\_epoch           & 20       & 20        & 65        & save\_topk           & 3        & 3         & 5         \\ \hline
\rowcolor{black!5}\multicolumn{8}{c}{ET-mix}                                                                      \\ \hline
$d_h$                & 128      & 128       & -         & max\_n\_vertex       & 1500     & 1500      & -         \\
lr                   & $5\times10^{-5}$& $5\times10^{-5}$& -  & final\_lr      & $5\times10^{-5}$& $5\times10^{-5}$ & -     \\
max\_epoch           & 20       & 20        & -         & save\_topk           & 3        & 3         & -         \\ \hline
\rowcolor{black!5}\multicolumn{8}{c}{GemNet}                                                                      \\ \hline
$d_h$                & 128      & 64        & -         & max\_n\_vertex       & 1000     & 2000      & -         \\
lr                   & $10^{-4}$& $10^{-3}$& -  & final\_lr      & $10^{-4}$& $10^{-6}$ & -     \\
max\_epoch           & 20       & 10        & -         & save\_topk           & 3        & 3         & -         \\ \hline
\rowcolor{black!5}\multicolumn{8}{c}{MACE}                                                                      \\ \hline
$d_h$                & 128      & 64        & -         & max\_n\_vertex       & 1500     & 1500      & -         \\
lr                   & $10^{-4}$& $10^{-3}$& -  & final\_lr      & $10^{-4}$& $10^{-6}$ & -     \\
max\_epoch           & 20       & 20        & -         & save\_topk           & 3        & 3         & -         \\ \hline
\rowcolor{black!5}\multicolumn{8}{c}{MACE-mix}                                                                      \\ \hline
$d_h$                & 128      & 128       & -         & max\_n\_vertex       & 1500     & 1500      & -         \\
lr                   & $5\times10^{-5}$& $5\times10^{-5}$& -  & final\_lr      & $5\times10^{-5}$& $5\times10^{-5}$ & -     \\
max\_epoch           & 20       & 20        & -         & save\_topk           & 3        & 3         & -         \\ \hline
\rowcolor{black!5}\multicolumn{8}{c}{Equiformer\tablefootnote{Equiformer is quite unstable and extremely memory intensive with large widths and depths.}}                                                                      \\ \hline
$d_h$                & 32       & 32       & -         & max\_n\_vertex       & 400      & 1000      & -         \\
lr                   & $10^{-4}$& $10^{-3}$& -  & final\_lr      & $10^{-4}$& $10^{-6}$ & -     \\
max\_epoch           & 10       & 10        & -         & save\_topk           & 3        & 3         & -         \\ \hline
\rowcolor{black!5}\multicolumn{8}{c}{LEFTNet}                                                                      \\ \hline
$d_h$                & 128      & 64        & -         & max\_n\_vertex       & 1000     & 2000      & -         \\
lr                   & $10^{-4}$& $10^{-3}$& -  & final\_lr      & $10^{-4}$& $10^{-6}$ & -     \\
max\_epoch           & 20       & 10        & -         & save\_topk           & 3        & 3         & -         \\ \hline
\rowcolor{black!5}\multicolumn{8}{c}{LEFTNet-mix}                                                                      \\ \hline
$d_h$                & 128      & 128       & -         & max\_n\_vertex       & 1000     & 1000      & -         \\
lr                   & $10^{-4}$& $10^{-4}$& -  & final\_lr      & $10^{-4}$& $10^{-4}$ & -     \\
max\_epoch           & 20       & 20        & -         & save\_topk           & 3        & 3         & -         \\ \hline
\end{tabular}
}
\vskip -0.2in
\end{table}

\subsection{Number of Parameters and Training Efficiency}

We further provide the number of parameters and training efficiency for the baselines as well as our GET in Table~\ref{tab:eff}. 

When comparing GET to simpler yet weaker baselines (e.g., SchNet and EGNN), it is evident that GET may have more parameters and a slower training speed. However, it is crucial to note that GET exhibits competitive parameter and computation efficiency when compared with more complex yet stronger baselines, such as Equiformer, MACE, and LEFTNet. It's worth mentioning that a significant portion of parameters in GET is attributed to the Feedforward Neural Network (FFN) that projects latent features to higher dimensions in intermediate layers, aligning with the structure of vanilla Transformers. Without this part, GET has the least parameters among all the models. Nevertheless, adding FFN should not harm the efficiency much as the time cost mainly comes from message passing over edges, which is propotional to the number of edges, while time complexity of FFN is propotional to the number of nodes whose value is much smaller than the number of edges.

Moreover, the throughput of GET is comparable to both atom-level and hierarchical counterparts. This aligns with the design of the model, which considers both block-level and atom-level geometry. We also present the complexity analysis in Appendix~\ref{app:complexity} to further elucidate the efficiency of our approach, showing its linear complexity concerning the number of nodes in the graph. This characteristic indicates favorable scalability to large graphs in practical settings.

\begin{table}[htbp]
\centering
\vskip -0.1in
\caption{Number of paramters and training speed for baselines and our GET.}
\label{tab:eff}
\scalebox{0.7}{
\begin{tabular}{cccccccccc}
\toprule
\multirow{2}{*}{Repr.}        & \multirow{2}{*}{Model} & \multicolumn{2}{c}{PPA}  &  & \multicolumn{2}{c}{LBA}  &  & \multicolumn{2}{c}{LEP}  \\ \cline{3-4} \cline{6-7} \cline{9-10} 
                              &                        & Parameter & Sec. / Batch &  & Parameter & Sec. / Batch &  & Parameter & Sec. / Batch \\ \midrule
\multirow{8}{*}{Block}        & SchNet                 & 0.25M     & 0.054        &  & 0.15M     & 0.040        &  & 0.14M     & 0.118        \\
                              & DimeNet++              & 1.53M     & 0.233        &  & 0.40M     & 0.189        &  & 0.40M     & 0.263        \\
                              & EGNN                   & 0.43M     & 0.054        &  & 0.12M     & 0.035        &  & 0.11M     & 0.130        \\
                              & ET                     & 0.71M     & 0.072        &  & 0.20M     & 0.050        &  & 0.20M     & 0.133        \\ 
                              & GemNet                 & 1.35M     & 0.225        &  & 0.69M     & 0.179        &  & -         & -            \\ 
                              & MACE                   & 12.9M     & 0.296        &  & 1.97M     & 0.285        &  & -         & -            \\ 
                              & Equiformer             & 0.56M     & 1.846        &  & 0.56M     & 1.364        &  & -         & -            \\ 
                              & LEFTNet                & 1.57M     & 0.088        &  & 0.43M     & 0.068        &  & -         & -            \\ 
                              \midrule
\multirow{8}{*}{Atom}         & SchNet                 & 0.25M     & 0.109        &  & 0.15M     & 0.050        &  & 0.14M     & 0.123        \\
                              & DimeNet++              & 1.56M     & OOM          &  & 0.40M     & 0.435        &  & 0.40M     & 0.357        \\
                              & EGNN                   & 0.43M     & 0.145        &  & 0.12M     & 0.060        &  & 0.11M     & 0.139        \\
                              & ET                     & 0.71M     & 0.217        &  & 0.20M     & 0.079        &  & 0.20M     & 0.145        \\ 
                              & GemNet                 & 1.35M     & OOM          &  & 0.69M     & OOM          &  & -         & -            \\ 
                              & MACE                   & 12.9M     & 1.259        &  & 1.97M     & 0.535        &  & -         & -            \\ 
                              & Equiformer             & 0.56M     & OOM          &  & 0.56M     & OOM          &  & -         & -            \\ 
                              & LEFTNet                & 1.57M     & 0.472        &  & 0.43M     & 0.177        &  & -         & -            \\ 
                              \midrule
\multirow{8}{*}{Hierarchical} & SchNet                 & 0.37M     & 0.127        &  & 0.21M     & 0.081        &  & 0.20M     & 0.088        \\
                              & DimeNet++              & 3.07M     & OOM          &  & 0.60M     & 0.622        &  & 0.60M     & 0.633        \\
                              & EGNN                   & 0.61M     & 0.143        &  & 0.17M     & 0.077        &  & 0.17M     & 0.100        \\
                              & ET                     & 1.00M     & 0.184        &  & 0.30M     & 0.104        &  & 0.29M     & 0.119        \\ 
                              & GemNet                 & 2.64M     & OOM          &  & 1.37M     & OOM          &  & -         & -            \\ 
                              & MACE                   & 25.7M     & 0.821        &  & 3.91M     & 0.426        &  & -         & -            \\ 
                              & Equiformer             & 1.10M     & OOM          &  & 1.10M     & OOM          &  & -         & -            \\ 
                              & LEFTNet                & 3.10M     & 0.307        &  & 0.85M     & 0.129        &  & -         & -            \\ 
                              \midrule
\multirow{2}{*}{Unified}      & GET (w/o FFN)          & 0.23M     & 0.291        &  & 0.09M     & 0.193        &  & 0.20M     & 0.155        \\
                              & GET                    & 2.50M     & 0.339        &  & 0.69M     & 0.237        &  & 1.60M     & 0.192        \\ \bottomrule
\end{tabular}
}
\vskip -0.1in
\end{table}

\section{Sensitivity to Width and Depth}
\label{app:sen_width_depth}
\begin{figure}[htbp]
    \centering
    \vskip -0.1in
    \includegraphics[width=.8\textwidth]{./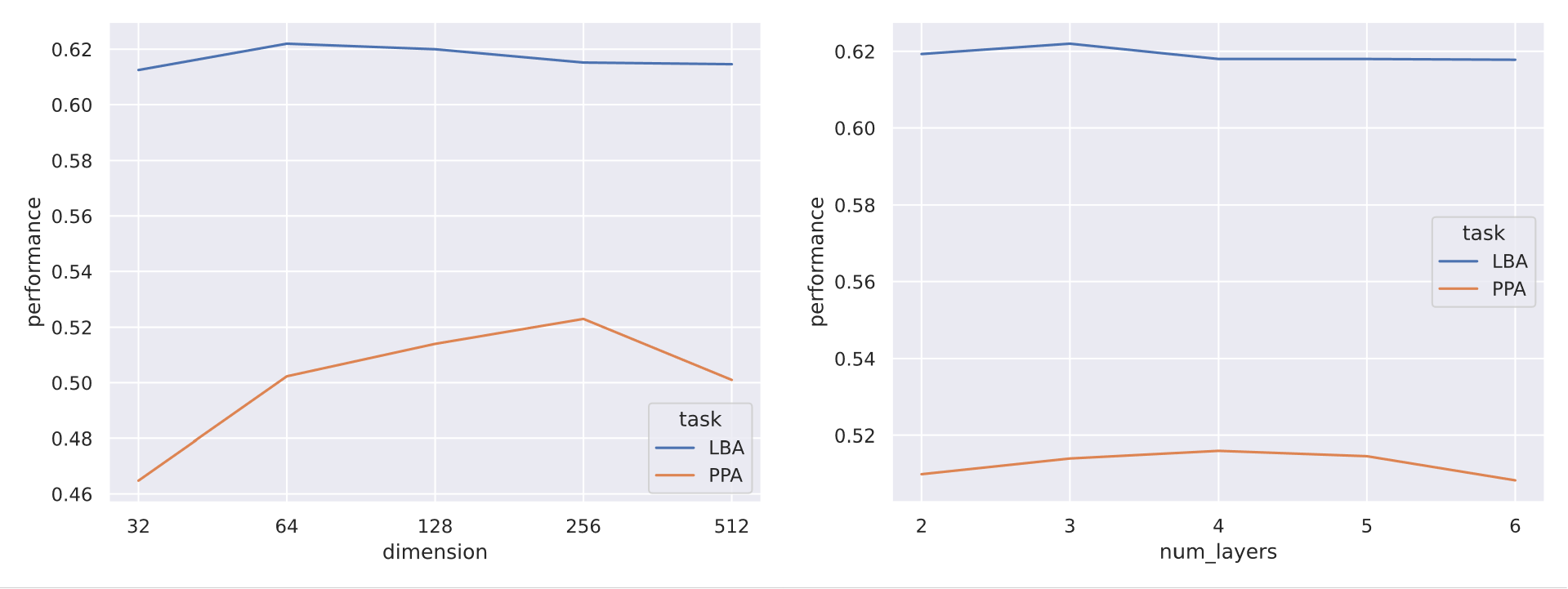}
    \vskip -0.2in
    \caption{Performance with respect to the dimensions of the hidden layers (left) and the number of layers (right) on protein-protein affinity (PPA) and ligand-binding affinity (LBA). }
    \label{fig:sensitivity}
\end{figure}
We show the performance with respect to dimensions of the hidden layers and the number of layers in Figure~\ref{fig:sensitivity} on protein-protein affinity (PPA) and ligand-binding affinity (LBA). The performance is not so sensitive to both width and depth on LBA, while it is relatively more sensitive to width on PPA. Nevertheless, the common trend is that performance increases first and then decreases with dimension getting larger or the network getting deeper, so it is still necessary to find the suitable size of the model that the dataset can support.

\section{Sensitivity to K-Nearest Neighbors}
\label{app:sen_kneighbors}
We adjust the number of nearest neighbors (\emph{i.e.} the hyperparameter $k$) and depict the change of performance in Figure~\ref{fig:ablation_knn}. It reads that the performance slightly decreases when $k$ is too small, but reaches saturation quickly as $k$ increases. This suggests that further increases of k do not significantly impact performance. Further analysis for sensitivity on hidden size and number of layers is included in Appendix~\ref{app:sen_width_depth}.
\begin{figure}[htbp]
    \vskip -0.1in
    \centering
    \includegraphics[width=.5\textwidth]{./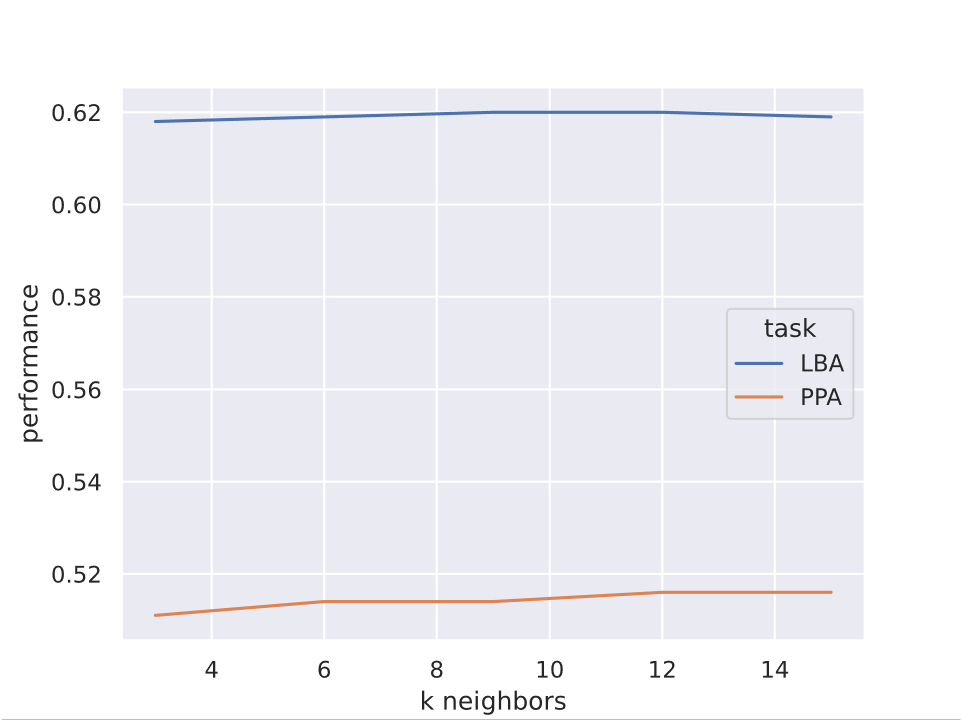}
    \caption{Performance (Pearson Correlation) with respect to the number of nearest neighbors on protein-protein affinity (PPA) and ligand-binding affinity (LBA).}
    \label{fig:ablation_knn}
\end{figure}

\section{Results for Protein Property Prediction}
\label{app:ec}
\begin{wraptable}[13]{r}{0.4\textwidth}
\vspace{-4.5ex}
\centering
\caption{Evaluation on Enzyme Commission number prediction. Baseline results are borrowed from~\citet{zhang2022protein}.}
    \label{tab:ec}
     \resizebox{\linewidth}{!}{
        \begin{tabular}{lcc}
        \toprule
        \multicolumn{1}{c}{Model}              & F1 Max$\uparrow$ & AUPRC$\uparrow$ \\ \midrule
        CNN~\citep{shanehsazzadeh2020transfer} & 0.545            & 0.526           \\
        ResNet~\citep{rao2019evaluating}       & 0.605            & 0.590           \\
        Transformer~\citep{rao2019evaluating}  & 0.238            & 0.218           \\
        GVP~\citep{jing2020learning}           & 0.489            & 0.482           \\
        GraphQA~\citep{baldassarre2021graphqa} & 0.509            & 0.543           \\
        New IEConv~\citep{hermosilla2022contrastive}& 0.735            & 0.775           \\
        DeepFRI~\citep{gligorijevic2021structure}& 0.631            & 0.547           \\
        LM-GVP~\citep{wang2022lm}              & 0.664            & 0.710           \\
        GearNet-IEConv~\citep{zhang2022protein}& 0.810            & 0.835           \\
        GearNet-Edge-IEConv~\citep{zhang2022protein}& 0.810            & 0.843           \\
        GET-IEConv (ours)                      & \textbf{0.835}   & \textbf{0.866}  \\ \bottomrule
        \end{tabular}
        }
  \end{wraptable}

 Though tasks concerning bare molecules are out of the main scope of this paper, we still include the results on protein property prediction here, in case the readers are curious of the capability of GET on tasks without molecular interactions. Following~\citet{zhang2022protein}, we evaluate our GET on \textbf{Enzyme Commission} (EC) number prediction, which requires predicting the characteristic of the given protein when acting as a catalyst. The EC numbers form 538 binary classification tasks, and comprises 15550, 1729, and 1919 proteins for training, validation, and testing, respectively. Aligning with~\citet{zhang2022protein}, we construct the residue-level graph of proteins and utilizes the edge features from IEConv~\citep{hermosilla2020intrinsic} and validate the models with F1 Max and AUPRC. Table~\ref{tab:ec} demonstrates that our GET achieves remarkable performance compared with the baselines, including the GearNet series~\citep{zhang2022protein} which are tailored for representation learning on proteins, indicating the strong modeling capability of our proposed GET.

\section{Detailed Results of Universal Learning of Molecular Interaction Affinity}
\label{app:mix}
We provide the mean and the standard deviation of three parallel experiments on the universal learning of molecular interaction affinity (\textsection~\ref{sec:generalization}) in Tables~\ref{tab:ppa_mix} and~\ref{tab:lba_mix}.
\begin{table}[htbp]
\centering
\caption{The mean and the standard deviation of three runs on protein-protein affinity prediction. Methods with the suffix "-mix" are trained on the mixed dataset of protein-protein affinity and ligand binding affinity. The best results are marked in bold and the second best are underlined.}
\label{tab:ppa_mix}
\setlength\tabcolsep{3pt}
\scalebox{0.65}{
\begin{tabular}{cccccc}
\toprule
Repr.                    &  Model                 & Rigid                     & Medium                    & Flexible                  & All                       \\ \midrule
\rowcolor{black!5}\multicolumn{6}{c}{Pearson$\uparrow$}                                                                                              \\ \hline
\multirow{8}{*}{Block}   & SchNet                 & $0.542\pm 0.012$          & $0.504\pm 0.020$          & $0.102\pm 0.019$          & $0.439\pm 0.016$          \\
                         & SchNet-mix             & $0.553\pm 0.029$          & $0.507\pm 0.011$          & $0.093\pm 0.041$          & $0.434\pm 0.011$          \\
                         & ET                     & $0.575\pm 0.041$          & $0.470\pm 0.024$          & $0.087\pm 0.024$          & $0.424\pm 0.021$          \\
                         & ET-mix                 & $0.579\pm 0.028$          & $0.502\pm 0.019$          & $0.179\pm 0.044$          & $0.457\pm 0.011$          \\
                         & MACE                   & $0.621\pm 0.022$          & $0.450\pm 0.027$          & $0.307\pm 0.041$          & $0.470\pm 0.015$          \\
                         & MACE-mix               & $0.572\pm 0.135$          & $0.353\pm 0.040$          & $0.170\pm 0.046$          & $0.372\pm 0.042$          \\
                         & LEFTNet                & $0.563\pm 0.035$          & $0.497\pm 0.018$          & $0.202\pm 0.016$          & $0.452\pm 0.013$          \\
                         & LEFTNet-mix            & $0.522\pm 0.042$          & $0.544\pm 0.009$          & $0.152\pm 0.058$          & $0.450\pm 0.008$          \\
                         \midrule
\multirow{8}{*}{Atom}    & SchNet                 & $0.592\pm 0.007$          &  $0.522\pm 0.010$ & $-0.038\pm 0.016$         & $0.369\pm 0.007$          \\
                         & SchNet-mix             & $0.625\pm 0.017$          & $0.520\pm 0.021$          & $-0.012\pm 0.049$         & $0.421\pm 0.019$          \\
                         & ET                     & $0.609\pm 0.023$          & $0.486\pm 0.004$          & $0.049\pm 0.009$          & $0.401\pm 0.005$          \\
                         & ET-mix                 & $0.618\pm 0.048$          & $0.444\pm 0.027$          & $0.057\pm 0.125$          & $0.382\pm 0.029$          \\
                         & MACE                   & $0.653\pm 0.066$          & $0.499\pm 0.053$          & $0.241\pm 0.061$          & $0.463\pm 0.052$          \\
                         & MACE-mix               & $0.579\pm 0.009$          & $0.484\pm 0.056$          & $0.197\pm 0.021$          & $0.444\pm 0.024$          \\
                         & LEFTNet                & $0.583\pm 0.080$          & $0.510\pm 0.029$          & $0.243\pm 0.091$          & $0.448\pm 0.046$          \\
                         & LEFTNet-mix               & {\ul $0.688\pm 0.021$}          & $0.532\pm 0.021$          & $0.244\pm 0.061$          & $0.476\pm 0.023$          \\
                         \midrule
\multirow{8}{*}{Hierarchical}& SchNet             & $0.542\pm 0.028$          & $0.507\pm 0.020$          & $0.098\pm 0.011$          & $0.438\pm 0.017$          \\
                         & SchNet-mix             & $0.524\pm 0.031$          & $0.515\pm 0.011$          & $0.135\pm 0.077$          & $0.429\pm 0.025$          \\
                         & ET                     & $0.572\pm 0.051$          & $0.498\pm 0.025$          & $0.101\pm 0.093$          & $0.438\pm 0.026$          \\
                         & ET-mix                 & $0.494\pm 0.100$          & $0.501\pm 0.007$          & $0.130\pm 0.055$          & $0.412\pm 0.035$          \\
                         & MACE                   & $0.616\pm 0.069$          & $0.461\pm 0.050$          & $0.275\pm 0.032$          & $0.466\pm 0.020$          \\
                         & MACE-mix               & $0.525\pm 0.122$          & $0.336\pm 0.067$          & $0.060\pm 0.114$          & $0.324\pm 0.076$          \\
                         & LEFTNet                & $0.533\pm 0.059$          & $0.494\pm 0.026$          & $0.165\pm 0.031$          & $0.445\pm 0.024$          \\
                         & LEFTNet-mix            & $0.594\pm 0.059$          & $\mathbf{0.543\pm 0.016}$        & $0.166\pm 0.109$          & $0.472\pm 0.020$          \\
                         \midrule
\multirow{2}{*}{Unified} & GET (ours)             & $0.670\pm 0.017$        & $0.512\pm 0.010$       & {\ul $0.381\pm 0.014$}    & {\ul $0.514\pm 0.011$}    \\
                         & GET-mix (ours)         & $\mathbf{0.697\pm 0.003}$ & {\ul $0.533\pm 0.004$} & $\mathbf{0.389\pm 0.009}$ & $\mathbf{0.519\pm 0.004}$ \\
                         \midrule
\rowcolor{black!5}\multicolumn{6}{c}{Spearman$\uparrow$}                                                                                              \\ \hline
\multirow{8}{*}{Block}   & SchNet                 & $0.476\pm 0.015$          & $0.520\pm 0.013$          & $0.068\pm 0.009$          & $0.427\pm 0.012$          \\
                         & SchNet-mix             & $0.497\pm 0.044$          & $0.527\pm 0.009$         & $0.042\pm 0.031$          & $0.426\pm 0.007$          \\
                         & ET                     & $0.552\pm 0.039$          & $0.482\pm 0.025$          & $0.090\pm 0.062$          & $0.415\pm 0.027$          \\
                         & ET-mix                 & $0.550\pm 0.039$          & $0.524\pm 0.019$          & $0.188\pm 0.070$          & $0.472\pm 0.019$          \\
                         & MACE                   & $0.596\pm 0.047$          & $0.450\pm 0.014$          & $0.306\pm 0.029$          & $0.466\pm 0.011$          \\
                         & MACE-mix               & $0.526\pm 0.129$          & $0.366\pm 0.023$          & $0.193\pm 0.030$          & $0.370\pm 0.030$          \\
                         & LEFTNet                & $0.515\pm 0.039$          & $0.492\pm 0.020$          & $0.193\pm 0.023$          & $0.452\pm 0.013$          \\
                         & LEFTNet-mix            & $0.505\pm 0.048$          & $0.543\pm 0.028$          & $0.147\pm 0.086$          & $0.439\pm 0.014$          \\
                         \midrule
\multirow{8}{*}{Atom}    & SchNet                 & $0.546\pm 0.005$          & $0.512\pm 0.007$          & $0.028\pm 0.032$          & $0.404\pm 0.016$          \\
                         & SchNet-mix             & $0.557\pm 0.042$          & $0.516\pm 0.033$          & $0.036\pm 0.010$          & $0.428\pm 0.022$          \\
                         & ET                     & $0.582\pm 0.025$          & $0.487\pm 0.002$          & $0.117\pm 0.008$          & $0.436\pm 0.004$          \\
                         & ET-mix                 & $0.608\pm 0.040$           & $0.453\pm 0.037$          & $0.058\pm 0.135$          & $0.394\pm 0.027$          \\
                         & MACE                   & $0.619\pm 0.037$           & $0.487\pm 0.049$          & $0.221\pm 0.064$          & $0.449\pm 0.052$          \\
                         & MACE-mix               & $0.504\pm 0.047$          & $0.483\pm 0.064$          & $0.226\pm 0.046$          & $0.449\pm 0.029$          \\
                         & LEFTNet                & $0.524\pm 0.074$          & $0.508\pm 0.038$          & $0.189\pm 0.066$          & $0.431\pm 0.046$          \\
                         & LEFTNet-mix            & $\mathbf{0.634\pm 0.060}$          & $0.518\pm 0.026$          & $0.216\pm 0.044$          & $0.455\pm 0.020$          \\
                         \midrule
\multirow{8}{*}{Hierarchical}& SchNet             & $0.476\pm 0.017$          & $0.523\pm 0.014$          & $0.072\pm 0.021$          & $0.424\pm 0.016$          \\
                         & SchNet-mix             & $0.487\pm 0.027$          & $0.532\pm 0.007$       & $0.096\pm 0.053$          & $0.412\pm 0.024$          \\
                         & ET                     & $0.547\pm 0.045$          & $0.516\pm 0.019$          & $0.100\pm 0.111$          & $0.438\pm 0.029$          \\
                         & ET-mix                 & $0.446\pm 0.116$          & $0.499\pm 0.009$          & $0.143\pm 0.090$          & $0.408\pm 0.042$          \\
                         & MACE                   & $0.580\pm 0.075$          & $0.476\pm 0.048$          & $0.282\pm 0.036$          & $0.470\pm 0.016$          \\
                         & MACE-mix               & $0.484\pm 0.098$          & $0.340\pm 0.061$          & $0.086\pm 0.125$          & $0.324\pm 0.076$          \\
                         & LEFTNet                & $0.476\pm 0.082$          & $0.494\pm 0.037$          & $0.151\pm 0.019$          & $0.446\pm 0.029$          \\
                         & LEFTNet-mix            & $0.572\pm 0.072$          & {\ul $0.553\pm 0.029$}          & $0.143\pm 0.124$          & $0.473\pm 0.029$          \\
                         \midrule
\multirow{2}{*}{Unified} & GET (ours)             &  $0.622\pm 0.030$        &  $0.533\pm 0.014$       & {\ul $0.363\pm 0.017$}    & {\ul $0.533\pm 0.011$}    \\
                         & GET-mix (ours)         & {\ul $0.632\pm 0.025$} & $\mathbf{0.555\pm 0.008}$ & $\mathbf{0.391\pm 0.007}$ & $\mathbf{0.537\pm 0.003}$ \\
                         \bottomrule
\end{tabular}
}
\end{table}

\begin{table}[htbp]
\centering
\caption{The mean and the standard deviation of three runs on ligand binding affinity prediction. Methods with the suffix "-mix" are trained on the mixed dataset of protein-protein affinity and ligand binding affinity. The best results are marked in bold and the second best are underlined.}
\label{tab:lba_mix}
\scalebox{0.8}{
\begin{tabular}{ccccc}
\toprule
\multirow{2}{*}{Repr.}   & \multirow{2}{*}{Model} & \multicolumn{3}{c}{LBA}                                                           \\ \cline{3-5}
                         &                        & RMSE$\downarrow$          & Pearson$\uparrow$         & Spearman$\uparrow$        \\ \midrule
\multirow{8}{*}{Block}   & SchNet                 & $1.406\pm 0.020$          & $0.565\pm 0.006$          & $0.549\pm 0.007$          \\
                         & SchNet-mix             & $1.385\pm 0.016$          & $0.573\pm 0.011$          & $0.553\pm 0.012$          \\
                         & ET                     & $1.367\pm 0.037$          & $0.599\pm 0.017$          & $0.584\pm 0.025$          \\
                         & ET-mix                 & $1.423\pm 0.054$          & $0.586\pm 0.012$          & $0.567\pm 0.019$          \\
                         & MACE                   & $1.385\pm 0.006$          & $0.599\pm 0.010$          & $0.580\pm 0.014$          \\
                         & MACE-mix               & $1.449\pm 0.050$          & $0.590\pm 0.018$          & $0.576\pm 0.010$          \\
                         & LEFTNet                & $1.377\pm 0.013$          & $0.588\pm 0.011$          & $0.576\pm 0.010$          \\
                         & LEFTNet-mix            & $1.433\pm 0.016$          & $0.543\pm 0.005$          & $0.532\pm 0.010$          \\
                         \midrule
\multirow{8}{*}{Atom}    & SchNet                 & $1.357\pm 0.017$ & $0.598\pm 0.011$          & $0.592\pm 0.015$          \\
                         & SchNet-mix             & $1.365\pm 0.010$    & $0.589\pm 0.006$          & $0.575\pm 0.009$          \\
                         & ET                     & $1.381\pm 0.013$          & $0.591\pm 0.007$          & $0.583\pm 0.009$          \\
                         & ET-mix                  & $1.448\pm 0.122$          & $0.566\pm 0.061$          & $0.564\pm 0.059$          \\
                         & MACE                   & $1.411\pm 0.029$          & $0.579\pm 0.009$          & $0.563\pm 0.012$          \\
                         & MACE-mix               & $1.420\pm 0.037$          & $0.580\pm 0.030$          & $0.568\pm 0.026$          \\
                         & LEFTNet                & $1.343\pm 0.004$          & $0.610\pm 0.004$          & $0.598\pm 0.003$          \\
                         & LEFTNet-mix               & $1.436\pm 0.019$          & $0.579\pm 0.014$          & $0.561\pm 0.016$          \\
                         \midrule
\multirow{8}{*}{Hierarchical}& SchNet             & $1.370\pm 0.028$          & $0.590\pm 0.017$          & $0.571\pm 0.028$          \\
                         & SchNet-mix             & $1.403\pm 0.010$          & $0.572\pm 0.004$          & $0.554\pm 0.004$          \\
                         & ET                     & $1.383\pm 0.009$          & $0.580\pm 0.008$          & $0.564\pm 0.004$          \\
                         & ET-mix                 & $1.421\pm 0.032$          & $0.569\pm 0.017$          & $0.558\pm 0.017$          \\
                         & MACE                   & $1.372\pm 0.021$          & $0.612\pm 0.010$          & $0.592\pm 0.010$          \\
                         & MACE-mix               & $1.432\pm 0.019$          & $0.588\pm 0.011$          & $0.572\pm 0.010$          \\
                         & LEFTNet                & $1.366\pm 0.016$          & $0.592\pm 0.014$          & $0.580\pm 0.011$          \\
                         & LEFTNet-mix               & $1.486\pm 0.081$          & $0.556\pm 0.001$          & $0.545\pm 0.005$          \\
                         \midrule
\multirow{2}{*}{Unified} & GET (ours)             & $\mathbf{1.327\pm 0.005}$          & {\ul $0.620\pm 0.004$}    & {\ul $0.611\pm 0.003$}    \\
                         & GET-mix (ours)         & {\ul $1.329\pm 0.008$}          & $\mathbf{0.622\pm 0.006}$ & $\mathbf{0.615\pm 0.008}$ \\ \bottomrule
\end{tabular}
}
\vskip -0.1in
\end{table}

\section{Attention Visualization}
\label{app:att}

\begin{figure}[H]
    \centering
    \vskip -0.1in
    \includegraphics[width=\textwidth]{./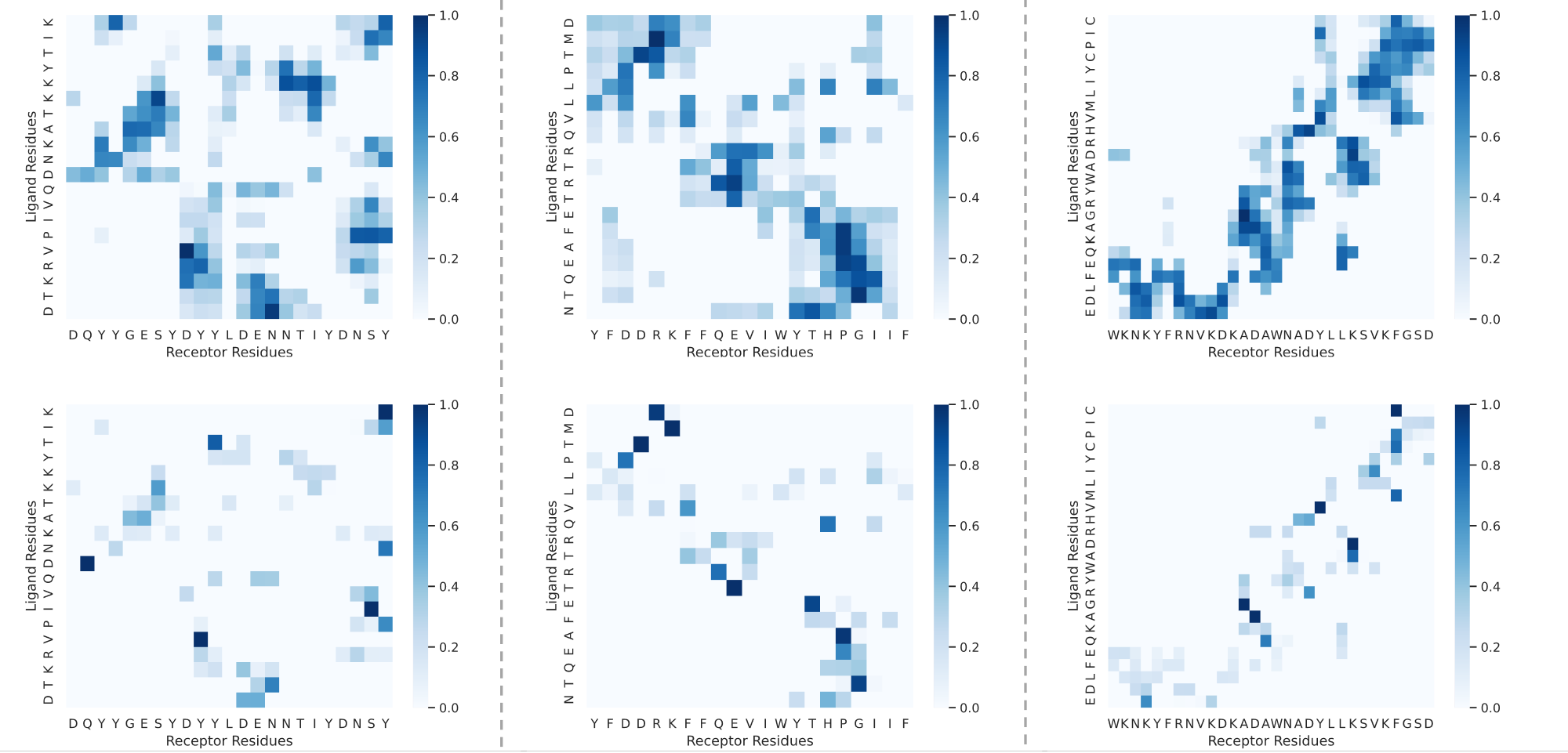}
    \vskip -0.1in
    \caption{Attention weights of GET (upper row) and energy contributions given by Rosetta (lower row). PDB identities of the complexes are 1ahw, 1b6c, and 1gxd from left to right.}
    \label{fig:att}
\end{figure}
We visualize the attention weights between blocks on the interface of protein-protein complexes and compare them with the energy contributions calculated by Rosetta~\citep{alford2017rosetta}, which uses physics-based force fields. It is observed that the hot spots of attention weights largely agree with those predicted by Rosetta. We present three examples in Figure~\ref{fig:att}. The values are normalized by the maximum value ($v_{\text{max}}$) and the minimum value ($v_\text{min}$) in each figure (\emph{i.e.} $v' = \frac{v - v_\text{min}}{v_\text{max} - v_\text{min}}$)

\section{Implications in Practical Applications}
\label{app:discuss}

Firstly, we believe that the introduction of a unified molecular representation marks a significant stride in the field of geometric molecular representation learning. The challenge of data scarcity, primarily stemming from the high costs associated with wetlab experiments, has long hindered progress in this domain. Our approach posits that, despite the limited availability of data in specific molecular domains, the underlying interaction mechanisms are shared across diverse domains. Consequently, we propose a unified model capable of accommodating data from different molecular domains, presenting a promising solution to the challenge of data scarcity. The keypoint of this strategy lies in the invention of unified molecular representations and corresponding models that exhibit robust generalization across different molecular domains. Our work serves as a first step towards this vision, demonstrating that our GET benefits from training on mixed data across different domains and exhibits exceptional zero-shot ability even on entirely unseen domains.

Secondly, in practical applications such as affinity prediction, our model offers a valuable tool for leveraging abundant data from other domains to enhance predictive performance in specific, often cutting-edge, domains that suffer from data scarcity. We illustrate this feasibility through our zero-shot experiments on RNA-ligand affinity prediction in Section 4.3. By demonstrating the adaptability of our model to a new domain without the need for specific traning data, we showcase the practical utility of our approach in scenarios where data is limited.

\section{Detailed Results of Protein-Protein Affinity}
\label{app:ppa}

We show the detailed mean and standard deviation of three runs on all test splits of protein-protein affinity in Table~\ref{tab:ppa}.

\setlength{\belowrulesep}{0pt}
\setlength{\extrarowheight}{.75ex}

\begin{table}[htbp]
\centering
\vskip -0.3in
\caption{The mean and the standard deviation of three runs on protein-protein affinity prediction. The best results are marked in bold and the second best are underlined. Baselines that fail to process atomic graphs due to high complexity are marked with OOM (out of memory)}
\label{tab:ppa}
\setlength\tabcolsep{3pt}
\scalebox{0.7}{
\begin{tabular}{cccccc}
\toprule
Repr.                    &  Model                 & Rigid                     & Medium                    & Flexible                  & All                       \\ \midrule
\rowcolor{black!5}\multicolumn{6}{c}{Pearson$\uparrow$}                                                                                              \\ \hline
\multirow{8}{*}{Block}   & SchNet                 & $0.542\pm 0.012$          & $0.504\pm 0.020$          & $0.102\pm 0.019$          & $0.439\pm 0.016$     \\
                         & DimeNet++              & $0.487\pm 0.087$          & $0.367\pm 0.043$          &  $0.152\pm 0.078$      & $0.323\pm 0.025$          \\
                         & EGNN                   & $0.437\pm 0.023$          & $0.436\pm 0.028$          & $0.094\pm 0.049$          & $0.381\pm 0.021$          \\
                         & ET                     & $0.575\pm 0.041$          & $0.470\pm 0.024$          & $0.087\pm 0.024$          & $0.424\pm 0.021$          \\
                         & GemNet                 & $0.480\pm 0.061$          & $0.425\pm 0.051$          & $0.086\pm 0.048$          & $0.387\pm 0.023$          \\
                         & MACE                   & $0.621\pm 0.022$          & $0.450\pm 0.027$          & $0.307\pm 0.041$          & $0.470\pm 0.015$          \\
                         & Equiformer             & $0.630\pm 0.024$          & $0.503\pm 0.015$          &{\ul $0.298\pm 0.017$}     & {\ul $0.484\pm 0.007$}          \\
                         & LEFTNet                & $0.563\pm 0.035$          & $0.497\pm 0.018$          & $0.202\pm 0.016$          & $0.452\pm 0.013$          \\
                         \midrule
\multirow{8}{*}{Atom}    & SchNet                 & $0.592\pm 0.007$          & $\mathbf{0.522\pm 0.010}$ & $-0.038\pm 0.016$         & $0.369\pm 0.007$          \\
                         & DimeNet++              & \multicolumn{4}{c}{OOM} \\
                         & EGNN                   & $0.497\pm 0.027$          & $0.452\pm 0.012$          & $-0.054\pm 0.013$         & $0.302\pm 0.010$          \\
                         & ET                     &  $0.609\pm 0.023$                   & $0.486\pm 0.004$          & $0.049\pm 0.009$          & $0.401\pm 0.005$          \\
                         & GemNet               & \multicolumn{4}{c}{OOM} \\
                         & MACE                   &{\ul $0.653\pm 0.066$}      & $0.499\pm 0.053$          & $0.241\pm 0.061$          & $0.463\pm 0.052$          \\
                         & Equiformer           & \multicolumn{4}{c}{OOM} \\
                         & LEFTNet                & $0.583\pm 0.080$          & $0.510\pm 0.029$          & $0.243\pm 0.091$          & $0.448\pm 0.046$          \\
                         \midrule
\multirow{8}{*}{Hierarchical}& SchNet                 & $0.542\pm 0.028$          & $0.507\pm 0.020$    & $0.098\pm 0.011$         & $0.438\pm 0.017$          \\
                         & DimeNet++              & \multicolumn{4}{c}{OOM} \\
                         & EGNN                   & $0.461\pm 0.018$          & $0.440\pm 0.024$          & $0.089\pm 0.051$          & $0.386\pm 0.021$          \\
                         & ET                     & $0.572\pm 0.051$          & $0.498\pm 0.025$          & $0.101\pm 0.093$          & $0.438\pm 0.026$          \\
                         & GemNet               & \multicolumn{4}{c}{OOM} \\
                         & MACE                   & $0.616\pm 0.069$          & $0.461\pm 0.050$          & $0.275\pm 0.032$          & $0.466\pm 0.020$          \\
                         & Equiformer           & \multicolumn{4}{c}{OOM} \\
                         & LEFTNet                & $0.533\pm 0.059$          & $0.494\pm 0.026$          & $0.165\pm 0.031$          & $0.445\pm 0.024$          \\
                         \midrule
\multirow{1}{*}{Unified} & GET (ours)             & $\mathbf{0.670\pm 0.017}$ & {\ul $0.512\pm 0.010$} & $\mathbf{0.381\pm 0.014}$ & $\mathbf{0.514\pm 0.011}$ \\
                         \midrule
\rowcolor{black!5}\multicolumn{6}{c}{Spearman$\uparrow$}                                                                                              \\ \hline
\multirow{8}{*}{Block}   & SchNet                 & $0.476\pm 0.015$          & $0.520\pm 0.013$          & $0.068\pm 0.009$          & $0.427\pm 0.012$          \\
                         & DimeNet++              & $0.466\pm 0.088$          & $0.368\pm 0.037$          &  $0.171\pm 0.054$    & $0.317\pm 0.031$          \\
                         & EGNN                   & $0.364\pm 0.043$          & $0.455\pm 0.026$          & $0.080\pm 0.038$          & $0.382\pm 0.022$          \\
                         & ET                     & $0.552\pm 0.039$          & $0.482\pm 0.025$          & $0.090\pm 0.062$          & $0.415\pm 0.027$          \\
                         & GemNet                 & $0.420\pm 0.072$          & $0.446\pm 0.059$          & $0.066\pm 0.058$          & $0.393\pm 0.027$          \\
                         & MACE                   & $0.596\pm 0.047$          & $0.450\pm 0.014$          &{\ul $0.306\pm 0.029$}     & $0.466\pm 0.011$          \\
                         & Equiformer             & $0.560\pm 0.015$          &{\ul $0.530\pm 0.017$}     & $0.251\pm 0.002$          &{\ul $0.496\pm 0.007$}         \\
                         & LEFTNet                & $0.515\pm 0.039$          & $0.492\pm 0.020$          & $0.193\pm 0.023$          & $0.452\pm 0.013$          \\
                         \midrule
\multirow{8}{*}{Atom}    & SchNet                 & $0.546\pm 0.005$          & $0.512\pm 0.007$          & $0.028\pm 0.032$          & $0.404\pm 0.016$          \\
                         & DimeNet++              & \multicolumn{4}{c}{OOM} \\
                         & EGNN                   & $0.450\pm 0.042$          & $0.438\pm 0.021$          & $0.027\pm 0.030$          & $0.349\pm 0.009$          \\
                         & ET                     &  $0.582\pm 0.025$                 & $0.487\pm 0.002$          & $0.117\pm 0.008$          & $0.436\pm 0.004$    \\
                         & GemNet               & \multicolumn{4}{c}{OOM} \\
                         & MACE                   &{\ul $0.619\pm 0.037$}        & $0.487\pm 0.049$          & $0.221\pm 0.064$          & $0.449\pm 0.052$          \\
                         & Equiformer           & \multicolumn{4}{c}{OOM} \\
                         & LEFTNet                & $0.524\pm 0.074$          & $0.508\pm 0.038$          & $0.189\pm 0.066$          & $0.431\pm 0.046$          \\
                         \midrule
\multirow{8}{*}{Hierarchical}& SchNet             & $0.476\pm 0.017$          &  $0.523\pm 0.014$    & $0.072\pm 0.021$         & $0.424\pm 0.016$          \\
                         & DimeNet++              & \multicolumn{4}{c}{OOM} \\
                         & EGNN                   & $0.387\pm 0.023$          & $0.461\pm 0.020$          & $0.078\pm 0.043$          & $0.390\pm 0.016$          \\
                         & ET                     & $0.547\pm 0.045$          & $0.516\pm 0.019$          & $0.100\pm 0.111$          & $0.438\pm 0.029$          \\
                         & GemNet               & \multicolumn{4}{c}{OOM} \\
                         & MACE                   & $0.580\pm 0.075$          & $0.476\pm 0.048$          & $0.282\pm 0.036$          & $0.470\pm 0.016$          \\
                         & Equiformer           & \multicolumn{4}{c}{OOM} \\
                         & LEFTNet                & $0.476\pm 0.082$          & $0.494\pm 0.037$          & $0.151\pm 0.019$          & $0.446\pm 0.029$          \\
                         \midrule
\multirow{1}{*}{Unified} & GET (ours)             & $\mathbf{0.622\pm 0.030}$ & $\mathbf{0.533\pm 0.014}$ & $\mathbf{0.363\pm 0.017}$ & $\mathbf{0.533\pm 0.011}$ \\
                         \bottomrule
\end{tabular}}
\vskip -0.2in
\end{table}



\end{document}